\definecolor{LightCyan}{rgb}{0.88,1,1}
\definecolor{LightOrange}{rgb}{1,0.85,0.70}
\definecolor{DarkCyan}{rgb}{0,0.8,0.8}
\definecolor{DarkOrange}{rgb}{1,0.50,0.0}
\newtheorem{lemma}{Lemma}
\newcommand\numberthis{\addtocounter{equation}{1}\tag{\theequation}}
\newcommand{\stackEq}[1]{%
  \setbox0=\hbox{${}\mathrel{\stackon[-1pt]{=}{\scriptstyle\text{#1\strut}}}{}$}
  \xdef\tmpwd{\dimexpr\the\wd0\relax}
  \kern.5\tmpwd\mathclap{\box0}&\kern.5\tmpwd
}
\def\eqref#1{equation~\ref{#1}}
\def\1{\bm{1}}
\DeclareMathAlphabet{\mathsfit}{\encodingdefault}{\sfdefault}{m}{sl}
\SetMathAlphabet{\mathsfit}{bold}{\encodingdefault}{\sfdefault}{bx}{n}
\def\gB{{\mathcal{B}}}
\def\gE{{\mathcal{E}}}
\def\gH{{\mathcal{H}}}
\def\gM{{\mathcal{M}}}
\def\gX{{\mathcal{X}}}
\newcommand{\E}{\mathbb{E}}
\newcommand{\mc}[1]{{\mathcal{#1}}}
\newcommand{\RR}{\mathbb{R}}
\renewcommand{\S}{\mathcal{S}}
\newcommand{\A}{\mathcal{A}}
\newcommand{\X}{\mathcal{X}}
\newcommand{\Y}{\mathcal{Y}}
\newcommand{\grad}{\nabla_{\theta}}
\renewcommand{\cite}{\citep}
\renewcommand\AB@affilsepx{}
\begin{document}

\title{{\LARGE\bfseries Imitation with Neural Density Models}}

\date{}

\author[1]{Kuno Kim}
\author[1]{Akshat Jindal}
\author[1]{Yang Song}
\author[1]{Jiaming Song}
\author[2]{Yanan Sui}
\author[1]{Stefano Ermon}

\affil[1]{Department of Computer Science, Stanford University \quad} 
\affil[2]{Department of Computer Science, Tsinghua University \newline}

\maketitle
\vspace{-3cm}
 
\thispagestyle{empty}

\begin{abstract}
We propose a new framework for Imitation Learning (IL) via density estimation of the expert's occupancy measure followed by Maximum Occupancy Entropy Reinforcement Learning (RL) using the density as a reward. Our approach maximizes a non-adversarial model-free RL objective that provably lower bounds reverse Kullback–Leibler divergence between occupancy measures of the expert and imitator. We present a practical IL algorithm, Neural Density Imitation (NDI), which obtains state-of-the-art demonstration efficiency on benchmark control tasks. 
\end{abstract}

\section{Introduction} 
\label{section:introduction}

Imitation Learning (IL) algorithms aim to learn optimal behavior by mimicking expert demonstrations. Perhaps the simplest IL method is Behavioral Cloning (BC) \cite{pomerleau1991efficient} which ignores the dynamics of the underlying Markov Decision Process (MDP) that generated the demonstrations, and treats IL as a supervised learning problem of predicting optimal actions given states. Prior work showed that if the learned policy incurs a small BC loss, the worst case performance gap between the expert and imitator grows quadratically with the number of decision steps \cite{ross2010efficient, ross2011a}. The crux of their argument is that policies that are "close" as measured by BC loss can induce disastrously different distributions over states when deployed in the environment. One family of solutions to mitigating such compounding errors is Interactive IL \cite{ross2011reduction, ross2013uav, guo2014ataridagger}, which involves running the imitator's policy and collecting corrective actions from an interactive expert. However, interactive expert queries can be expensive and are seldom available. 

Another family of approaches \cite{ho2016generative, fu2017learning, ke2019divergenceimitation, kostrikov2020valuedice, kim2018mmdgail, wang2017diverse} that have gained much traction is to directly minimize a statistical distance between state-action distributions induced by policies of the expert and imitator, i.e the occupancy measures $\rho_{\pi_{E}}$ and $\rho_{\pi_{\theta}}$. 
As $\rho_{\pi_{\theta}}$ is an implicit distribution induced by the policy and environment\footnote{we assume only samples can be taken from the environment dynamics and its density is unknown}, distribution matching with $\rho_{\pi_{\theta}}$ typically requires likelihood-free methods involving sampling. Sampling from $\rho_{\pi_{\theta}}$ entails running the imitator policy in the environment, which was not required by BC. While distribution matching IL requires additional access to an environment simulator, it has been shown to drastically improve demonstration efficiency, i.e the number of demonstrations needed to succeed at IL \cite{ho2016generative}. A wide suite of distribution matching IL algorithms use adversarial methods to match $\rho_{\pi_{\theta}}$ and $\rho_{\pi_{E}}$, which requires alternating between reward (discriminator) and policy (generator) updates \cite{ho2016generative, fu2017learning, ke2019divergenceimitation, kostrikov2020valuedice, kim2019cross}. A key drawback to such Adversarial Imitation Learning (AIL) methods is that they inherit the instability of alternating min-max optimization \cite{salimans2016gantechniques, miyato2018spectral} which is generally not guaranteed to converge \cite{jin2019minmax}. Furthermore, this instability is exacerbated in the IL setting where generator updates involve high-variance policy optimization and leads to sub-optimal demonstration efficiency. To alleviate this instability, \cite{wang2020red, brantley2020disagreement, reddy2017sqil} have proposed to do RL with fixed heuristic rewards. \citet{wang2020red}, for example, uses a heuristic reward that estimates the support of $\rho_{\pi_{E}}$ which discourages the imitator from visiting out-of-support states. While having the merit of simplicity, these approaches have no guarantee of recovering the true expert policy. 

In this work, we propose a new framework for IL via obtaining a density estimate $q_{\phi}$ of the expert's occupancy measure $\rho_{\pi_E}$ followed by Maximum Occupancy Entropy Reinforcement Learning (MaxOccEntRL) \cite{lee2019marginal, islam2019maxent}. In the MaxOccEntRL step, the density estimate $q_{\phi}$ is used as a \emph{fixed reward} for RL and the occupancy entropy $\mc{H}(\rho_{\pi_{\theta}})$ is simultaneously maximized, leading to the objective $\max_{\theta} \E_{\rho_{\pi_{\theta}}}[q_{\phi}(s, a)] + \mc{H}(\rho_{\pi_{\theta}}) $. Intuitively, our approach encourages the imitator to visit high density state-action pairs under $\rho_{\pi_{E}}$ while maximally exploring the state-action space. 
There are two main challenges to this approach. First, we require accurate density estimation of $\rho_{\pi_{E}}$, which is particularly challenging when the state-action space is high dimensional and the number of expert demonstrations are limited. Second, in contrast to Maximum Entropy RL (MaxEntRL), MaxOccEntRL requires maximizing the entropy of an implicit density $\rho_{\pi_{\theta}}$. We address the former challenge leveraging advances in density estimation \cite{germain2015made, du2018ebm, song2019sliced}. For the latter challenge, we derive a non-adversarial model-free RL objective that provably maximizes a lower bound to occupancy entropy. As a byproduct, we also obtain a model-free RL objective that lower bounds reverse Kullback-Lieber (KL) divergence between $\rho_{\pi_{\theta}}$ and $\rho_{\pi_{E}}$. We evaluate our method, named Neural Density Imitation (NDI), on high-dimensional benchmark robotics tasks, and show that it achieves state-of-the-art demonstration efficiency. 

\section{Imitation Learning via density estimation}
\label{section:theory}
We model an agent's decision making process as a discounted infinite-horizon Markov Decision Process (MDP) $\gM = (\S, \A, P, P_{0}, r, \gamma)$. Here $\S, \A$ are state-action spaces, $P: \S \times \A \rightarrow \Omega(\S)$ is a transition dynamics where $\Omega(\S)$ is the set of probability measures on $\S$, $P_{0}: \S \rightarrow \RR$ is an initial state distribution, $r: \S \times \A \rightarrow \RR$ is a reward function, and $\gamma \in [0, 1)$ is a discount factor. A parameterized policy $\pi_{\theta}: \S \rightarrow \Omega(\A)$ distills the agent's decision making rule and $\{s_{t}, a_{t}\}_{t=0}^{\infty}$ is the stochastic process realized by sampling an initial state from $s_{0} \sim P_{0}(s)$ then running $\pi_{\theta}$ in the environment, i.e $a_{t} \sim \pi_{\theta}(\cdot | s_{t}), s_{t+1} \sim P(\cdot |s_{t}, a_{t})$. We denote by $p_{\theta, t:t+k}$ the joint distribution of states $\{s_{t}, s_{t+1}, ..., s_{t+k}\}$, where setting $p_{\theta, t}$ recovers the marginal of $s_{t}$. The (unnormalized) occupancy measure of $\pi_{\theta}$ is defined as $\rho_{\pi_{\theta}}(s, a) = \sum_{t = 0}^{\infty} \gamma^{t} p_{\theta, t}(s)\pi_{\theta}(a | s)$. Intuitively, $\rho_{\pi_{\theta}}(s, a)$ quantifies the frequency of visiting the state-action pair $(s, a)$ when running $\pi_{\theta}$ for a long time, with more emphasis on earlier states. 

We denote policy performance as $J(\pi_{\theta}, \bar{r}) = \E_{\pi_{\theta}}[\sum_{t=0}^{\infty} \gamma^{t} \bar{r}(s_t, a_t)] = \E_{(s, a) \sim \rho_{\pi_{\theta}}}[\bar{r}(s, a)]$ where $\bar{r}$ is a (potentially) augmented reward function and $\E$ denotes the generalized expectation operator extended to non-normalized densities $\hat{p}: \mc{X} \rightarrow \RR^{+}$ and functions $f: \mc{X} \rightarrow \mc{Y}$ so that $\E_{\hat{p}}[f(x)] = \sum_x \hat{p}(x)f(x)$. The choice of $\bar{r}$ depends on the RL framework. In standard RL, we simply have $\bar{r} = r$, while in Maximum Entropy RL (MaxEntRL) \cite{haarnoja2017reinforcement}, we have $\bar{r}(s, a) = r(s, a) - \log \pi_{\theta}(a | s)$. We denote the entropy of $\rho_{\pi_{\theta}}(s, a)$ as $\gH(\rho_{\pi_{\theta}}) = \E_{\rho_{\pi_{\theta}}}[-\log \rho_{\pi_{\theta}}(s, a)]$ and overload notation to denote the $\gamma$-discounted causal entropy of policy $\pi_{\theta}$ as $\gH(\pi_{\theta}) = \E_{\pi_{\theta}}[\sum_{t=0}^{\infty} -\gamma^{t} \log \pi_{\theta}(a_{t} | s_{t})] = \E_{\rho_{\pi_{\theta}}}[-\log \pi_{\theta}(a | s)]$. Note that we use a generalized notion of entropy where the domain is extended to non-normalized densities. 
We can then define the Maximum Occupancy Entropy RL (MaxOccEntRL) \cite{lee2019marginal, islam2019maxent} objective as $J(\pi_{\theta}, \bar{r}=r) + \gH(\rho_{\pi_{\theta}})$. Note the key difference between MaxOccEntRL and MaxEntRL: entropy regularization is on the occupancy measure instead of the policy, i.e seeks state diversity instead of action diversity. We will later show in section \ref{sec:maxoccentrl}, that a lower bound on this objective reduces to a complete model-free RL objective with an augmented reward  $\bar{r}$. 

Let $\pi_{E}, \pi_{\theta}$ denote an expert and imitator policy, respectively. Given only demonstrations $\mc{D} = \{(s, a)_{i}\}_{i = 1}^{k} \sim \pi_{E}$ of state-action pairs sampled from the expert, Imitation Learning (IL) aims to learn a policy $\pi_{\theta}$ which matches the expert, i.e $\pi_{\theta} = \pi_{E}$. Formally, IL can be recast as a distribution matching problem \cite{ho2016generative, ke2019divergenceimitation} between occupancy measures $\rho_{\pi_{\theta}}$ and $\rho_{\pi_{E}}$:
\begin{align*}
    \mathrm{maximize}_{\theta} -d(\rho_{\pi_{\theta}}, \rho_{\pi_{E}}) \numberthis \label{eq:dist_match}
\end{align*}
where $d(\hat{p}, \hat{q})$ is a generalized statistical distance defined on the extended domain of (potentially) non-normalized probability densities $\hat{p}(x), \hat{q}(x)$ with the same normalization factor $Z > 0$, i.e $\int_x \hat{p}(x)/Z = \int_x \hat{q}(x)/Z = 1$. For $\rho_{\pi}$ and $\rho_{\pi_{E}}$, we have $Z = \frac{1}{1 - \gamma}$. As we are only able to take samples from the transition kernel and its density is unknown, $\rho_{\pi_{\theta}}$ is an implicit distribution\footnote{probability models that have potentially intractable density functions, but can be sampled from to estimate expectations and gradients of expectations with respect to model parameters \cite{huszar2017implicit}.}.
Thus, optimizing Eq. \ref{eq:dist_match} typically requires likelihood-free approaches leveraging samples from $\rho_{\pi_{\theta}}$, i.e running $\pi_{\theta}$ in the environment. Current state-of-the-art IL approaches use likelihood-free adversarial methods to approximately optimize Eq. \ref{eq:dist_match} for various choices of $d$ such as reverse Kullback-Liebler (KL) divergence \cite{fu2017learning, kostrikov2020valuedice} and Jensen-Shannon (JS) divergence \cite{ho2016generative}. However, adversarial methods are known to suffer from optimization instability which is exacerbated in the IL setting where one step in the alternating optimization involves RL. 

We instead derive a non-adversarial objective for IL. In this work, we choose $d$ to be (generalized) reverse-KL divergence and leave derivations for alternate choices of $d$ to future work.
\begin{align*}
-D_{\mathrm{KL}}(\rho_{\pi_{\theta}} || \rho_{\pi_{E}}) 
= \E_{\rho_{\pi_{\theta}}}[\log \rho_{\pi_{E}}(s, a) - \log \rho_{\pi_{\theta}}(s, a)] = J(\pi_{\theta}, \bar{r} = \log \rho_{\pi_{E}}) + \gH(\rho_{\pi_{\theta}}) \numberthis \label{eq:reverseKL} 
\end{align*}
We see that maximizing negative reverse-KL with respect to $\pi_{\theta}$ is equivalent to Maximum Occupancy Entropy RL (MaxOccEntRL) with $\log \rho_{\pi_{E}}$ as the fixed reward. Intuitively, this objective drives $\pi_{\theta}$ to visit states that are most likely under $\rho_{\pi_{E}}$ while maximally spreading out probability mass so that if two state-action pairs are equally likely, the policy visits both. There are two main challenges associated with this approach which we address in the following sections. 
\begin{enumerate}[leftmargin=*]
\item $\log \rho_{\pi_{E}}$ is unknown and must be estimated from the demonstrations $\mc{D}$. Density estimation remains a challenging problem, especially when there are a limited number of samples and the data is high dimensional \cite{liu2007nonparametric}. Note that simply extracting the conditional $\pi(a|s)$ from an estimate of the joint $\rho_{\pi_{E}}(s, a)$ is an alternate way to do BC and does not resolve the compounding error problem \cite{ross2011a}. 

\item $\gH(\rho_{\pi_{\theta}})$ is hard to maximize as $\rho_{\pi_{\theta}}$ is an implicit density. This challenge is similar to the difficulty of entropy regularizing generators \cite{mohamed2016implicit, belghazi2018entropy, dieng2019prescribed} for Generative Adversarial Networks (GANs) \cite{goodfellow2014generative}, and most existing approaches \cite{dieng2019prescribed, lee2019marginal} use adversarial optimization.
\end{enumerate}

\subsection{Estimating the expert occupancy measure} 
We seek to learn a parameterized density model $q_{\phi}(s, a)$ of $\rho_{\pi_{E}}$ from samples. We consider two canonical families of density models: Autoregressive models and Energy-based models (EBMs). 

\textbf{Autoregressive Models} \cite{germain2015made, papamakarios2017maf}: An autoregressive model $q_{\phi}(x)$ for $x = (x_1, ..., x_{\mathrm{dim}(\S) + \mathrm{dim}(\A)}) = (s, a)$ learns a factorized distribution of the form: 
\begin{equation}
q_{\phi}(x) = \Pi_i q_{\phi_i}(x_i | x_{<i})
\label{eq:made}
\end{equation}
For instance, each factor $q_{\phi_{i}}$ could be a mapping from $x_{<i}$ to a Gaussian density over $x_{i}$. When given a prior over the true dependency structure of $\{x_{i}\}$, this can be incorporated by refactoring Eq. \ref{eq:made}. Autoregressive models are typically trained via Maximum Likelihood Estimation (MLE). 
\begin{equation} 
\max_{\phi} \E_{\rho_{\pi_{E}}}[\log q_{\phi}(s, a)]
\label{eq:mle}
\end{equation}

\textbf{Energy-based Models (EBM)} \cite{du2018ebm, song2019sliced}: Let $\gE_{\phi}: \S \times \A \rightarrow \RR$ be an energy function. An energy based model is a parameterized Boltzman distribution of the form: 
\begin{equation}
q_{\phi}(s, a) = \frac{1}{Z(\phi)} e^{-\gE_{\phi}(s, a)} 
\label{eq:ebm}
\end{equation}
where $Z(\phi) = \int_{\S \times \A} e^{-\gE_{\phi}(s, a)}$ denotes the partition function. Energy-based models are desirable for high dimensional density estimation due to their expressivity, but are typically difficult to train due to the intractability of computing the partition function. However, our IL objective in Eq. \ref{eq:dist_match} conveniently only requires a non-normalized density estimate as policy optimality is invariant to constant shifts in the reward.
Thus, we opted to perform non-normalized density estimation with EBMs using score matching which allows us to directly learn $\gE_{\phi}$ without having to estimate $Z(\phi)$. Standard score matching \cite{hyvarinen2005estimation} minimizes the following objective.
\begin{equation}
\min_{\phi} \E_{\rho_{\pi_{E}}}[\mathrm{tr}(\nabla^2_{s, a} \gE_{\phi}(s, a)) + \frac{1}{2}\lVert \nabla_{s, a} \gE_{\phi}(s, a) \rVert_{2}^{2}]
\label{eq:score}
\end{equation}

\subsection{Maximum Occupancy Entropy Reinforcement Learning}
\label{sec:maxoccentrl}

In general maximizing the entropy of implicit distributions is challenging due to the fact that there is no analytic form for the density function. Prior works have proposed using adversarial methods involving noise injection \cite{dieng2019prescribed} and fictitious play \cite{brown1951fictitious, lee2019marginal}. We instead propose to maximize a novel lower bound to the occupancy entropy which we prove is equivalent to maximizing a non-adversarial model-free RL objective. We first introduce a crucial ingredient in deriving our occupancy entropy lower bound, which is a tractable lower bound to Mutual Information (MI) first proposed by Nguyen, Wainright, and Jordan \cite{nguyen2010nwj}, also known as the $f$-GAN KL \cite{nowozin2016f} and MINE-$f$ \cite{belghazi2018entropy}. For random variables $X, Y$ distributed according to $p_{\theta_{xy}}(x, y), p_{\theta_{x}}(x), p_{\theta_{y}}(y)$ where $\theta = (\theta_{xy}, \theta_{x}, \theta_{y})$, and any critic function $f: \X \times \Y \rightarrow \RR$, it holds that $I(X; Y | \theta) \geq I^{f}_{\mathrm{NWJ}}(X; Y | \theta)$ where, 
\begin{equation}
I^f_{\mathrm{NWJ}}(X; Y | \theta) := \E_{p_{\theta_{xy}}}[f(x, y)] - e^{-1}\E_{p_{\theta_{x}}}[\E_{p_{\theta_{y}}}[e^{f(x, y)}]]
\label{eq:inwj}
\end{equation}
This bound is tight when $f$ is chosen to be the optimal critic $f^*(x, y) = \log \frac{p_{\theta_{xy}}(x, y)}{p_{\theta_{x}}(x)p_{\theta_y}(y)} + 1$. We are now ready to state a lower bound to the occupancy entropy. 

\begin{restatable}{theorem}{lowerbound} 
\label{thm:entropy_lowerbound}
Let MDP $\gM$ satisfy assumption \ref{ass:mdp} (App. \ref{sec:proofs}). For any critic $f: \S \times \S \rightarrow \RR$, it holds that 
\begin{equation}
\gH(\rho_{\pi_{\theta}}) 
\geq \gH^{f}(\rho_{\pi_{\theta}})
\label{eq:entropy_lowerbound}
\end{equation}
where 
\begin{equation}
\gH^{f}(\rho_{\pi_{\theta}}) 
:= \gH(s_{0}) 
+ (1 + \gamma) \gH(\pi_{\theta}) 
+ \gamma \sum_{t = 0}^{\infty} \gamma^{t} I^{f}_{\mathrm{NWJ}}(s_{t+1}; s_{t} | \theta)
\label{eq:entropy_def}
\end{equation}
\end{restatable}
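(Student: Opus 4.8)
The plan is to reduce the occupancy-entropy bound to three ingredients: a factorization of the generalized entropy, a time-mixture decomposition of the state-occupancy entropy, and a per-step conditional-entropy/mutual-information identity, finishing with the NWJ inequality of Eq.~\ref{eq:inwj}. First I would split off the policy entropy. Since $\rho_{\pi_\theta}(s,a) = \rho_{\pi_\theta}(s)\,\pi_\theta(a\mid s)$ with state-occupancy measure $\rho_{\pi_\theta}(s) = \sum_t \gamma^t p_{\theta,t}(s)$, and since $\sum_a \pi_\theta(a\mid s)=1$, the generalized entropy factors as $\gH(\rho_{\pi_\theta}) = \gH(\rho^s_{\pi_\theta}) + \gH(\pi_\theta)$, where $\gH(\rho^s_{\pi_\theta}) = \E_{\rho_{\pi_\theta}}[-\log\rho_{\pi_\theta}(s)]$ is the state-occupancy entropy and $\gH(\pi_\theta)=\E_{\rho_{\pi_\theta}}[-\log\pi_\theta(a\mid s)] = \sum_t\gamma^t\E_{p_{\theta,t}}[H(\pi_\theta(\cdot\mid s))]$ is exactly the discounted causal entropy already defined. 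This accounts for one of the two copies of $\gH(\pi_\theta)$ in the target, so the remaining task is to show $\gH(\rho^s_{\pi_\theta}) \ge \gH(s_0) + \gamma\gH(\pi_\theta) + \gamma\sum_t\gamma^t I^f_{\mathrm{NWJ}}(s_{t+1};s_t\mid\theta)$.

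Next I would view $\rho^s_{\pi_\theta}$ as a mixture of the time-marginals $p_{\theta,t}$ with geometric weights, introducing an auxiliary time index $T$ with $P(T=t)\propto\gamma^t$, so that the (normalized) state occupancy is the law of $s_T$. The entropy chain rule gives $H(s_T) = H(s_T\mid T) + I(s_T;T)$, and nonnegativity of $I(s_T;T)$ lower-bounds the occupancy entropy by the discounted average of marginal entropies $\sum_t\gamma^t H(p_{\theta,t})$. Then, for each $t\ge 1$, I expand the marginal entropy along the induced state chain via the joint $p_{\theta,t-1:t}$: $H(p_{\theta,t}) = H(s_t\mid s_{t-1}) + I(s_t;s_{t-1})$. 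Invoking Assumption~\ref{ass:mdp} — the condition making the induced state-transition map entropy-preserving (deterministic and invertible dynamics) — reduces the conditional term to the per-state policy entropy, $H(s_t\mid s_{t-1}) = \E_{p_{\theta,t-1}}[H(\pi_\theta(\cdot\mid s))]$, while the $t=0$ term contributes $\gH(s_0)$. Re-indexing the two resulting sums converts $\sum_{t\ge1}\gamma^t\E_{p_{\theta,t-1}}[H(\pi_\theta(\cdot\mid s))]$ into $\gamma\,\gH(\pi_\theta)$ and $\sum_{t\ge1}\gamma^t I(s_t;s_{t-1})$ into $\gamma\sum_t\gamma^t I(s_{t+1};s_t\mid\theta)$.

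Finally I would replace each true mutual information by its tractable surrogate through the NWJ bound $I(s_{t+1};s_t\mid\theta)\ge I^f_{\mathrm{NWJ}}(s_{t+1};s_t\mid\theta)$ of Eq.~\ref{eq:inwj}, which holds for any critic $f$ and is exactly what produces the critic-dependent quantity $\gH^f$; monotonicity preserves the inequality direction. Adding back the $\gH(\pi_\theta)$ peeled off at the start assembles the coefficient $(1+\gamma)$ and yields Eq.~\ref{eq:entropy_lowerbound}. I expect the main obstacle to be the mixture step: the occupancy entropy is the entropy of a log-of-a-sum over time and is not additive, so its reduction to a discounted sum of per-step quantities must be justified through the auxiliary index $T$ and nonnegativity of $I(s_T;T)$, and the discount-weight normalization constants this reweighting introduces must be tracked carefully — they are independent of $\theta$, hence immaterial to the downstream RL objective, but they must be handled to land on the stated clean form. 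A secondary point requiring care is pinning down precisely what Assumption~\ref{ass:mdp} must provide to identify $H(s_{t+1}\mid s_t)$ with the policy entropy, since in continuous state spaces it is an entropy-/volume-preserving transition condition that makes this identification exact.
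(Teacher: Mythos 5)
Your proposal is correct and follows essentially the same route as the paper's proof: factor out $\gH(\pi_{\theta})$ via $\rho_{\pi_{\theta}}(s,a)=\rho_{\pi_{\theta}}(s)\pi_{\theta}(a|s)$, lower-bound the state-occupancy entropy by the discounted sum of per-timestep marginal entropies, expand each $\gH(s_t)=\gH(s_t|s_{t-1})+I(s_t;s_{t-1})$, use the deterministic injective dynamics of Assumption \ref{ass:mdp} to identify $\gH(s_t|s_{t-1})$ with $\gH(a_{t-1}|s_{t-1})$, and finish with the NWJ bound. Your auxiliary time-index $T$ with $I(s_T;T)\geq 0$ is just an information-theoretic restatement of the paper's concavity/Jensen step (Lemma \ref{lem:extended_entropy}), and your explicit flag about tracking the $\theta$-independent normalization constants arising from $\sum_t\gamma^t=\frac{1}{1-\gamma}$ is a point the paper itself elides.
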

\vspace{-10pt}

See Appendix \ref{sec:proof_thm_entropy_lowerbound} for the proof and a discussion of the bound tightness. Here onwards, we refer to $\gH^{f}(\rho_{\pi_{\theta}})$ from Theorem \ref{thm:entropy_lowerbound} as the State-Action Entropy Lower Bound (SAELBO). The SAELBO mainly decomposes into policy entropy and mutual information between consecutive states. Intuitively, the policy entropy term $\gH(\pi_{\theta})$ pushes the policy to be as uniformly random as possible. The MI term $I_{\mathrm{NWJ}}(s_{t+1}; s_{t} | \theta)$ encourages the agent to seek out states where it is currently behaving least randomly as the next state is more predictable from the current state (i.e shares more information) if the policy is less random. Together these objectives will encourage the agent to continuously seek out states where its policy is currently least random and update the policy to be more random in those states!
We note an important distinction between solely maximizing the policy entropy term versus our bound. Consider a video game where the agent must make a sequence of purposeful decisions to unlock new levels, e.g Montezuma's revenge. Assume that your policy is a conditional gaussian with the initial variance set to be small. Soley optimizing policy entropy will yield a uniform random policy which has very low probability of exploring any of the higher levels. On the otherhand, optimizing our bound will encourage the agent to continuously seek out novel states, including new levels, since the policy variance will be small there. Next, we show that the gradient of the SAELBO is equivalent to the gradient of a model-free RL objective. 

\begin{restatable}{theorem}{entropygradient} 
\label{thm:gradient}
Let $q_{\pi}(a | s)$ and $\{q_{t}(s)\}_{t \geq 0}$ be probability densities such that $\forall s, a \in \S \times \A$ satisfy $q_{\pi}(a | s) = \pi_{\theta}(a | s)$ and $q_{t}(s) = p_{\theta, t}(s)$. Then for all $f: \S \times \S \rightarrow \RR$, 
\begin{equation}
\grad \gH^{f}(\rho_{\pi_{\theta}}) = \grad J(\pi_{\theta}, \bar{r} = r_{\pi} + r_{f})
\label{eq:entropy_gradient}
\end{equation}
where
\begin{align*}
r_{\pi}(s_t, a_t) &= -(1+\gamma) \log q_{\pi}(a_t | s_t) \numberthis \label{eq:reward_pi}\\
r_{f}(s_{t}, a_{t}, s_{t+1}) &= \gamma f(s_{t}, s_{t+1}) -  \frac{\gamma}{e}\E_{\tilde{s}_{t} \sim q_{t}, \tilde{s}_{t+1} \sim q_{t+1}}[e^{f(\tilde{s}_{t}, s_{t+1})} + e^{f(s_{t}, \tilde{s}_{t+1})}] \numberthis \label{eq:reward_f}
\end{align*}
\end{restatable}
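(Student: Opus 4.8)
The plan is to differentiate the SAELBO of Eq.~\ref{eq:entropy_def} term by term and show that each piece reproduces a contribution to the policy gradient of $J(\pi_\theta, \bar r = r_\pi + r_f)$; linearity of the expectation and of the policy gradient then assembles the full identity. Throughout I would apply the likelihood-ratio (REINFORCE) identity to the trajectory law $\tau \sim \pi_\theta$, writing each term in the form $\E_\tau[\sum_t \gamma^t(\cdots)]$ and splitting its gradient into a \emph{sampling} part (the score $\grad \log \pi_\theta$ flowing through the trajectory distribution) and a \emph{direct} part (the gradient of the integrand). The entire purpose of the auxiliary densities $q_\pi$ and $q_t$ is to act as stop-gradient copies: because they agree pointwise with $\pi_\theta$ and $p_{\theta,t}$ they leave every \emph{value} unchanged, while because they are treated as $\theta$-independent they discard exactly the direct part, isolating the sampling part that a fixed model-free reward can express.

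I would first dispose of $\gH(s_0)$, whose gradient vanishes since $P_0$ does not depend on $\theta$. For the policy-entropy term, write $(1+\gamma)\gH(\pi_\theta) = (1+\gamma)\E_\tau[\sum_t -\gamma^t \log \pi_\theta(a_t|s_t)]$; differentiating produces a direct part $(1+\gamma)\E_\tau[\sum_t -\gamma^t \grad \log \pi_\theta(a_t|s_t)]$ that vanishes term-by-term by the expected-score identity $\E_{a_t \sim \pi_\theta(\cdot|s_t)}[\grad \log \pi_\theta(a_t|s_t)] = 0$. The surviving sampling part is precisely the policy gradient of $J$ with the frozen reward $r_\pi = -(1+\gamma)\log q_\pi$ of Eq.~\ref{eq:reward_pi}, since replacing $\log \pi_\theta$ by the stop-gradient $\log q_\pi$ removes the direct contribution while preserving values.

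The delicate work is in the mutual-information terms, which I would expand via Eq.~\ref{eq:inwj}. The positive term $\E_{p_{\theta,t:t+1}}[f]$ has a $\theta$-free integrand (the critic $f$ is fixed), so its gradient is a pure sampling gradient and equals the policy gradient of $J$ under the reward $\gamma f(s_t, s_{t+1})$, the first summand of $r_f$ in Eq.~\ref{eq:reward_f}. The subtle term is the negative-sample expectation $e^{-1}\E_{p_{\theta,t}}[\E_{p_{\theta,t+1}}[e^{f}]]$, a product of two $\theta$-dependent marginals; the product rule splits its gradient into a piece $A$ that differentiates $p_{\theta,t}$ with $p_{\theta,t+1}$ held fixed and a piece $B$ that differentiates $p_{\theta,t+1}$ with $p_{\theta,t}$ held fixed. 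Here the two frozen expectations in $r_f$ do exactly the right bookkeeping: $\E_{\tilde s_{t+1} \sim q_{t+1}}[e^{f(s_t, \tilde s_{t+1})}]$ is a function of $s_t$ whose sampling gradient reproduces $A$, while $\E_{\tilde s_t \sim q_t}[e^{f(\tilde s_t, s_{t+1})}]$ is a function of $s_{t+1}$ whose sampling gradient reproduces $B$. Because $q_t = p_{\theta,t}$ and $q_{t+1} = p_{\theta,t+1}$ pointwise the frozen integrands carry the correct values, and because they hold no $\theta$-gradient no spurious cross term appears; matching the outer $\gamma \sum_t \gamma^t$ weight against the explicit $\gamma$ already inside $r_f$ then aligns the discounting.

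The hard part will be the measure-theoretic bookkeeping underlying the marginal-state policy-gradient identities, namely establishing $\grad \E_{p_{\theta,t}}[c(s_t)] = \E_\tau[c(s_t)\sum_{t'<t}\grad \log \pi_\theta(a_{t'}|s_{t'})]$ and its two-state analogue, and justifying the interchange of $\grad$ with the integrals and with the infinite discounted sum; this is where Assumption~\ref{ass:mdp} (regularity of the MDP and differentiability of $\pi_\theta$ in $\theta$, with an integrable dominating envelope) enters, via dominated convergence. The one genuinely subtle point I would check carefully is that the product-rule split of the double-marginal term decomposes into \emph{exactly} the two frozen expectations of $r_f$ with no double counting: at the level of values the symmetric quantity $\E_{p_{\theta,t}}\E_{p_{\theta,t+1}}[e^f]$ differs from the value produced by $r_f$, and the claim is only an equality of \emph{gradients}, so the resolution is precisely that the two marginals must be differentiated separately, each furnishing one of the two stop-gradient terms.
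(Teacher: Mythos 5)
Your proposal follows essentially the same route as the paper's proof: term-by-term differentiation of the SAELBO, the likelihood-ratio (score-function) identity with $q_{\pi}$ and $q_{t}$ acting as stop-gradient copies of $\pi_{\theta}$ and $p_{\theta,t}$, and a product-rule split of the double-marginal negative term of $I^{f}_{\mathrm{NWJ}}$ into exactly the two frozen expectations appearing in $r_{f}$ --- including the correct observation that the identity holds only at the level of gradients, not values. One minor correction: Assumption~\ref{ass:mdp} concerns deterministic, injective dynamics and is invoked only for Theorem~\ref{thm:entropy_lowerbound}; the regularity needed to interchange $\grad$ with the infinite discounted sum is not supplied by it (and is likewise left implicit in the paper's own proof).
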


See Appendix \ref{sec:proof_thm_gradient} for the proof. Theorem \ref{thm:gradient} shows that maximizing the SAELBO is equivalent to maximizing a discounted model-free RL objective with the reward $r_{\pi} + r_{f}$, where $r_{\pi}$ contributes to maximizing $\gH(\pi_{\theta})$ and $r_{f}$ contributes to maximizing $\sum_{t = 0}^{\infty} \gamma^{t} I^{f}_{\mathrm{NWJ}}(s_{t+1}; s_{t} | \theta)$. Note that evaluating $r_{f}$ entails estimating expectations with respect to $q_{t}, q_{t+1}$. This can be accomplished by rolling out multiple trajectories with the current policy and collecting the states from time-step $t, t+1$. Alternatively, if we assume that the policy is changing slowly, we can simply take samples of states from time-step $t, t+1$ from the replay buffer. Combining the results of Theorem \ref{thm:entropy_lowerbound}, \ref{thm:gradient}, we end the section with a lower bound on the original distribution matching objective from Eq. \ref{eq:dist_match} and show that maximizing this lower bound is again, equivalent to maximizing a model-free RL objective. 

\begin{restatable}{corollary}{klgradient} 
\label{cor:objective}
Let MDP $\gM$ satisfy assumption \ref{ass:mdp} (App. \ref{sec:proofs}). For any critic $f: \S \times \S \rightarrow \RR$, it holds that
\begin{equation}
-D_{\mathrm{KL}}(\rho_{\pi_\theta} || \rho_{\pi_{E}}) \geq J(\pi_\theta, \bar{r}=\log \rho_{\pi_{E}}) + \gH^{f}(\rho_{\pi_{\theta}})
\end{equation}
Furthermore, let $r_{\pi}, r_{f}$ be defined as in Theorem \ref{thm:gradient}. Then, 
\begin{align*}
\grad \big(J(\pi_\theta, \bar{r}=\log \rho_{\pi_{E}}) + \gH^{f}(\rho_{\pi_{\theta}})\big) = \grad J(\pi_{\theta}, \bar{r}=\log \rho_{\pi_{E}}+r_{\pi}+r_{f}) \numberthis 
\label{eq:objective}
\end{align*}
\end{restatable}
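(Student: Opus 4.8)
The plan is to obtain both assertions by directly composing Eq.~\ref{eq:reverseKL}, Theorem~\ref{thm:entropy_lowerbound}, and Theorem~\ref{thm:gradient}, so that essentially no new analytic content is needed beyond linearity. For the inequality, I would begin from the exact decomposition in Eq.~\ref{eq:reverseKL}, namely $-D_{\mathrm{KL}}(\rho_{\pi_\theta} || \rho_{\pi_{E}}) = J(\pi_\theta, \bar r = \log\rho_{\pi_{E}}) + \gH(\rho_{\pi_\theta})$, and then substitute the entropy lower bound $\gH(\rho_{\pi_\theta}) \geq \gH^{f}(\rho_{\pi_\theta})$ supplied by Theorem~\ref{thm:entropy_lowerbound}, which holds for every critic $f$ whenever $\gM$ satisfies Assumption~\ref{ass:mdp}. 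Because the term $J(\pi_\theta, \bar r = \log\rho_{\pi_{E}})$ is identical on both sides, the bound $-D_{\mathrm{KL}}(\rho_{\pi_\theta} || \rho_{\pi_{E}}) \geq J(\pi_\theta, \bar r = \log\rho_{\pi_{E}}) + \gH^{f}(\rho_{\pi_\theta})$ follows at once; this step is routine.

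For the gradient identity, I would first apply linearity of $\grad$ to split the left-hand side as $\grad J(\pi_\theta, \bar r = \log\rho_{\pi_{E}}) + \grad \gH^{f}(\rho_{\pi_\theta})$, and then invoke Theorem~\ref{thm:gradient} to replace the second summand by $\grad J(\pi_\theta, \bar r = r_\pi + r_f)$. The crucial bookkeeping, inherited from Theorem~\ref{thm:gradient}, is that $r_\pi$ and $r_f$ are defined through the frozen copies $q_\pi, q_t$ that coincide with $\pi_\theta, p_{\theta,t}$ at the current parameter yet are treated as constants under $\grad$; consequently all three rewards — $\log\rho_{\pi_{E}}$, which is genuinely $\theta$-independent, together with $r_\pi$ and $r_f$ — enter $J(\pi_\theta, \bar r) = \E_{\rho_{\pi_\theta}}[\bar r(s,a)]$ linearly. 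Since both expectation and $\grad$ are linear operators, $J(\pi_\theta, \cdot)$ is additive in these fixed rewards and so is its gradient, yielding $\grad J(\pi_\theta, \bar r = \log\rho_{\pi_{E}}) + \grad J(\pi_\theta, \bar r = r_\pi + r_f) = \grad J(\pi_\theta, \bar r = \log\rho_{\pi_{E}} + r_\pi + r_f)$, which is exactly Eq.~\ref{eq:objective}.

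The main obstacle is not in the corollary itself — which reduces to a one-line consequence of the two preceding theorems — but in justifying the gradient-level linearity cleanly, since $r_\pi$ and $r_f$ genuinely do depend on $\theta$ through $\pi_\theta$ and $p_{\theta,t}$. The point to emphasize in the write-up is that the identity is asserted at a fixed evaluation point, where Theorem~\ref{thm:gradient} has already licensed freezing $q_\pi, q_t$ as stop-gradient constants; under that freezing the three rewards are simultaneously $\theta$-independent, so the additivity $\grad J(\pi_\theta, \bar r_1) + \grad J(\pi_\theta, \bar r_2) = \grad J(\pi_\theta, \bar r_1 + \bar r_2)$ becomes an exact equality rather than an approximation. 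I would therefore devote most of the argument to making explicit that this bookkeeping is valid precisely because the reward is frozen at the evaluation point, and defer all genuine analytic work to the proofs of Theorems~\ref{thm:entropy_lowerbound} and~\ref{thm:gradient}.
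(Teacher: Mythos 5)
Your proposal is correct and matches the paper's (implicit) argument exactly: the paper offers no separate proof of this corollary, presenting it as the direct combination of Eq.~\ref{eq:reverseKL}, Theorem~\ref{thm:entropy_lowerbound}, and Theorem~\ref{thm:gradient}, which is precisely your chain of reasoning. Your added care about treating $q_\pi$ and $q_t$ as frozen at the evaluation point so that the reward-additivity of $\grad J(\pi_\theta,\cdot)$ is exact is a faithful reading of the hypotheses of Theorem~\ref{thm:gradient} and makes the bookkeeping more explicit than the paper does.
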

In the following section we derive a practical distribution matching IL algorithm combining all the ingredients from this section.

\begin{algorithm}[t]
\caption{Neural Density Imitation (NDI)} \label{alg:density}
\textbf{Require:} Demonstrations $\mc{D} \sim \pi_{E}$, Reward weights $\lambda_{\pi}, \lambda_{f}$, Learning rates $\eta_{\theta}, \eta_{\phi}$, Critic $f$ \\ 
\vspace{7pt}
\emph{\textbf{Phase 1}. Density estimation}: \\
Learn $q_{\phi}(s, a)$ from $\mc{D}$ using MADE or EBMs by optimizing Eq. \ref{eq:mle} or \ref{eq:ssm_objective} with learning rate $\eta_{\phi}$. \\
\vspace{7pt}
\emph{\textbf{Phase 2}. MaxOccEntRL}:
\\
\For{$k = 1, 2, ...$}{
    Collect $(s_t, a_t, s_{t+1}, \bar{r}) \sim \pi_{\theta}$ and add to replay buffer $\mc{B}$, where $\bar{r}=\log q_{\phi}+\lambda_{\pi}r_{\pi}+\lambda_{f}r_{f}$,
    \begin{align*}
        r_{\pi}(s_t, a_t) &= -\log \pi_{\theta}(a_{t} | s_{t}) \\
        r_{f}(s_{t}, a_{t}, s_{t+1}) &= f(s_{t}, s_{t+1}) -  \frac{\gamma}{e}\E_{\tilde{s}_t \sim \gB_t, \tilde{s}_{t+1}\sim \gB_{t+1}}[e^{f(s_{t+1}, \tilde{s}_{t})} + e^{f(\tilde{s}_{t+1}, s_{t})}]
    \end{align*}
    
    Update $\pi_{\theta}$ using Soft Actor-Critic (SAC) \cite{haarnoja2018soft}: 
    \begin{align*}
    \theta_{k+1} \leftarrow \theta_{k} + \eta_{\theta}\grad J(\theta, \bar{r}) |_{\theta = \theta_{k}}
    \end{align*}
    \vspace{-15pt}
}
\label{alg:awml}
\end{algorithm}

\section{Neural Density Imitation (NDI)}
\label{section:algorithm}

From previous section's results, we propose Neural Density Imitation (NDI) that works in two phases:  

\textbf{Phase 1: Density estimation}: As described in Section \ref{section:theory}, we seek to leverage Autoregressive models and EBMs for density estimation of the expert's occupancy measure $\rho_{\pi_{E}}$ from samples. As in \cite{ho2016generative, fu2017learning}, we take the state-action pairs in the demonstration set $\mc{D} = \{(s, a)_i\}_{i = 1}^{N} \sim \pi_{E}$ to approximate samples from $\rho_{\pi_{E}}$ and fit $q_{\phi}$ on $\mc{D}$. For Autoregressive models, we use Masked Autoencoders for Density Estimation (MADE) \cite{germain2015made} where the entire collection of conditional density models $\{q_{\phi_i}\}$ is parameterized by a single masked autoencoder network. Specifically, we use a gaussian mixture variant \cite{papamakarios2017maf} of MADE where each of the conditionals $q_{\phi_i}$ map inputs $x_{<i}$ to the mean and covariance of a gaussian mixture distribution over $x_{i}$. The autoencoder is trained via MLE as in Eq. \ref{eq:mle}. With EBMs, we perform non-normalized log density estimation and thus directly parameterize the energy function $\gE_{\phi}$ with neural networks since $\log q_{\phi} = \gE_{\phi} + \log Z(\phi)$. One hurdle to learning $\gE_{\phi}$ with the standard score matching objective of Eq. \ref{eq:score} is the computational expense of estimating $\mathrm{tr}(\nabla^2_{s, a} \gE_{\phi}(s, a))$. To reduce this expense, we use Sliced Score Matching \cite{song2019sliced} which leverages random projections $v \sim p_v$ to approximate the trace term as in the following equation. 
\vspace{-5pt}
\begin{equation} 
\min_{\phi} \E_{p_{v}}[\E_{\rho_{\pi_{E}}}[v^{T}\nabla^2_{s, a} \gE_{\phi}(s, a) v + \frac{1}{2}\lVert \gE_{\phi}(s, a) \rVert_{2}^{2}]]
\label{eq:ssm_objective}
\end{equation}
\vspace{-5pt}

\textbf{Phase 2: MaxOccEntRL}
After we've acquired a log density estimate $\log q_{\phi}$ from the previous phase, we perform RL with entropy regularization on the occupancy measure. Inspired by Corollary \ref{cor:objective}, we propose the following RL objective 
\begin{equation}
\max_{\theta} J(\pi_{\theta}, \bar{r}=\log q_{\phi} +\lambda_{\pi}r_{\pi}+\lambda_{f}r_{f})
\label{eq:practical_objective}
\end{equation}
where $\lambda_{\pi}, \lambda_{f} > 0$ are weights introduced to control the influence of the occupancy entropy regularization. In practice, Eq. \ref{eq:practical_objective} can be maximized using any RL algorithm by simply setting the reward function to be $\bar{r}$ from Eq. \ref{eq:practical_objective}. In this work, we use Soft Actor-Critic (SAC) \cite{haarnoja2018soft}. For our critic $f$, we \emph{fix it} to be a normalized RBF kernel for simplicity, but future works could explore learning the critic to match the optimal critic: 
\begin{equation}
f(s_{t+1}, s_{t}) 
= \log \frac{e^{-\lVert s_{t+1} - s_{t}\rVert_2^2}}{\E_{q_t, q_{t+1}}[e^{-\lVert s_{t+1} - s_{t}\rVert_2^2}]} + 1
\label{eq:critic}
\end{equation}
While simple, our choice of $f$ emulates two important properties of the optimal critic $f^*(x, y) = \log \frac{p(x | y)}{p(x)} + 1$: (1). it follows the same "form" of a log-density ratio plus a constant (2). consecutively sampled states from the joint, i.e $s_t, s_{t+1} \sim p_{\theta, t:t+1}$  have high value under our $f$ since they are likely to be close to each other under smooth dynamics, while samples from the marginals $s_t, s_{t+1} \sim q_{t}, q_{t+1}$ are likely to have lower value under $f$ since they can be arbitrarily different states. To estimate the expectations with respect to $q_t, q_{t+1}$ in Eq. \ref{eq:reward_f}, we simply take samples of previously visited states at time $t, t+1$ from the replay buffer. Note that even if $f$ is learned to be the optimal critic, NDI is still non-adversarial since both $\pi$ and $f$ are updated in the same direction of maximizing the objective, unlike GAIL which performs min-max optimization. (see Appendix \ref{sec:adversarial_discussion})

\section{Related Works} 

Prior literature on Imitation learning (IL) in the absence of an interactive expert revolves around Behavioral Cloning (BC) \cite{pomerleau1991efficient, wu2019futurebc} and distribution matching IL \cite{ho2016generative, fu2017learning, song2018multi, ke2019divergenceimitation, kostrikov2020valuedice, kim2019cross}. Many approaches in the latter category minimize statistical divergences using adversarial methods to solve a min-max optimization problem, alternating between reward (discriminator) and policy (generator) updates. 
ValueDICE, a more recently proposed adversarial IL approach, formulates reverse KL divergence into a completely off-policy objective thereby greatly reducing the number of environment interactions. A key issue with such Adversarial Imitation Learning (AIL) approaches is optimization instability \cite{miyato2018spectral, jin2019minmax}. Recent works have sought to avoid adversarial optimization by instead performing RL with a heuristic reward function that estimates the support of the expert occupancy measure. Random Expert Distillation (RED) \cite{wang2020red} and Disagreement-regularized IL \cite{brantley2020disagreement} are two representative approaches in this family. A key limitation of these approaches is that support estimation is insufficient to recover the expert policy and thus they require an additional behavioral cloning step. Unlike AIL, we maximize a non-adversarial RL objective and unlike heuristic reward approaches, our objective provably lower bounds reverse KL between occupancy measures of the expert and imitator. 
Density estimation with deep neural networks is an active research area, and much progress has been made towards modeling high-dimensional structured data like images and audio. Most successful approaches parameterize a normalized probability model and estimate it with maximum likelihood, e.g., autoregressive models~\cite{uria2013rnade,uria2016neural,germain2015made,oord2016pixel} and normalizing flow models~\cite{dinh2014nice,dinh2016density,kingma2018glow}. Some other methods explore estimating non-normalized probability models with MCMC~\cite{du2019implicit,yu2020training} or training with alternative statistical divergences such as score matching~\cite{hyvarinen2005estimation, song2019sliced, song2019gradients} and noise contrastive estimation~\cite{gutmann2010noise,gao2019flow}. Related to MaxOccEntRL, recent works \cite{lee2019marginal, hazan2019maxent, islam2019maxent} on exploration in RL have investigated state-marginal occupancy entropy maximization. To do so, \cite{hazan2019maxent}
requires access to a robust planning oracle, while \cite{lee2019marginal} uses fictitious play, an alternative adversarial algorithm that is guaranteed to converge. Unlike these works, our approach maximizes the SAELBO which requires no planning oracle nor min-max optimization, and is trivial to implement with existing RL algorithms.

\section{Experiments}
\label{section:results}

\textbf{Environment}: Following prior work, we run experiments on benchmark Mujoco \cite{todorov2012mujoco, brockman2017openaigym} tasks: Hopper (11, 3), HalfCheetah (17, 6), Walker (17, 6), Ant (111, 8), and Humanoid (376, 17), where the (observation, action) dimensions are noted parentheses. 

\textbf{Pipeline}: We train expert policies using SAC \cite{haarnoja2018soft}. All of our results are averaged across five random seeds where for each seed we randomly sample a trajectory from an expert, perform density estimation, and then MaxOccEntRL. For each seed we save the best imitator as measured by our augmented reward $\bar{r}$ from Eq. \ref{eq:practical_objective} and report its performance with respect to the ground truth reward. We don't perform sparse subsampling on the data as in \cite{ho2016generative} since real world demonstration data typically aren't subsampled to such an extent and using full trajectories was sufficient to compare performance. 

\textbf{Architecture}: We experiment with two variants of our method, NDI+MADE and NDI+EBM, where the only difference lies in the density model. Across all experiments, our density model $q_{\phi}$ is a two-layer MLP with 256 hidden units. For hyperparameters related to the MaxOccEntRL step, $\lambda_{\pi}$ is tuned automatically in the stable-baselines implementation \cite{stable-baselines}, and we set $\lambda_{f} = 0.005$ (see Section \ref{sec:ablation} for ablation studies). For full details on architecture see Appendix \ref{sec:baselines}.

\textbf{Baselines}: We compare our method against the following baselines: 
(1). \emph{Behavioral Cloning (BC)} \cite{pomerleau1991efficient}: learns a policy via direct supervised learning on $\mc{D}$. 
(2). \emph{Random Expert Distillation (RED)} \cite{wang2020red}: estimates the support of the expert policy using a predictor and target network \cite{burda2018rnd}, followed by RL using this heuristic reward. (3). \emph{Generative Adversarial Imitation Learning (GAIL)} \cite{ho2016generative}: on-policy adversarial IL method which alternates reward and policy updates. 
(4). \emph{ValueDICE} \cite{kostrikov2020valuedice}: current state-of-the-art adversarial IL method that works off-policy. See Appendix \ref{sec:baselines} for baseline implementation details.

\begin{table}[t]
    \setlength{\tabcolsep}{6pt}
    \renewcommand{\arraystretch}{1.}
    \small
    \caption{\textbf{Task Performance} when provided with \emph{one} demonstration. NDI (orange rows) outperforms all baselines on all tasks. See Appendix \ref{sec:more_demo} for results with varying demonstrations.}
    \vspace{0pt}
    \label{table:imitation}
    \begin{center}
    \begin{sc}
    \begin{tabular}{lccccc}
    \toprule
    & Hopper & Half-Cheetah & Walker & Ant & Humanoid \\
    \midrule
    Random & $14\pm8$ & $-282\pm80$ & $1\pm5$ & $-70\pm111$ & $123\pm35$ \\
    \midrule \midrule
    BC & $1432 \pm 382$ & $2674 \pm 633$ & $ 1691 \pm 1008$ & $1425 \pm 812$ & $ 353 \pm 171$\\
    RED & $448 \pm 516$ & $ 383 \pm 819$ & $309 \pm 193$ & $910 \pm 175$ & $242 \pm 67$\\
    GAIL & $3261 \pm 533$ & $3017 \pm 531$ & $3957 \pm 253$ & $ 2299 \pm  519$ & $204 \pm 67$\\
    ValueDICE & $ 2749 \pm 571$ & $3456 \pm 401$ & $ 3342 \pm 1514$ & $1016 \pm 313$ & $ 364 \pm 50$\\
    \rowcolor{LightOrange} NDI+MADE & $3288 \pm 94$ & $4119 \pm 71$ & $4518 \pm 127$ & $555 \pm 311$ & $\mathbf{6088 \pm 689}$ \\
    \rowcolor{LightOrange} NDI+EBM & $\mathbf{3458 \pm 210}$ & $\mathbf{4511 \pm 569}$ & $\mathbf{5061 \pm 135}$ & $\mathbf{4293 \pm 431}$ & $5305 \pm 555$ \\
    \midrule \midrule
    Expert & $3567 \pm 4$ & $4142 \pm 132$ & $5006 \pm 472$ & $4362 \pm 827$ & $5417 \pm 2286$ \\
    \bottomrule
    \end{tabular}
    \end{sc}
    \end{center}

\vspace{-20pt}
\end{table}

\subsection{Task Performance}
\label{sec:imitation}

Table \ref{table:imitation} compares the ground truth reward acquired by agents trained with various IL algorithms when \emph{one} demonstration is provided by the expert. (See Appendix \ref{sec:more_demo} for performance comparisons with varying demonstrations) \emph{NDI+EBM achieves expert level performance on all mujoco benchmarks when provided one demonstration and outperforms all baselines on all mujoco benchmarks}. NDI+MADE achieves expert level performance on 4/5 tasks but fails on Ant. We found spurious modes in the density learned by MADE for Ant, and the RL algorithm was converging to these local maxima. 
We found that baselines are commonly unable to solve Humanoid with one demonstration (the most difficult task considered). RED is unable to perform well on all tasks without pretraining with BC as done in \cite{wang2020red}. For fair comparisons with methods that do not use pretraining, we also do not use pretraining for RED. See Appendix \ref{sec:pretrain} for results with a BC pretraining step added to all algorithms. 
GAIL and ValueDICE perform comparably with each other, both outperforming behavioral cloning. We note that these results are somewhat unsurprising given that ValueDICE \cite{kostrikov2020valuedice} did not claim to improve demonstration efficiency over GAIL \cite{ho2016generative}, but rather focused on reducing the number of environment interactions. Both methods notably under-perform the expert on Ant-v3 and Humanoid-v3 which have the largest state-action spaces. Although minimizing the number of environment interactions was not a targeted goal of this work, we found that NDI roughly requires an order of magnitude less environment interactions than GAIL. Please see Appendix \ref{sec:environment_sample} for full environment sample complexity comparisons.

\subsection{Density Evaluation}
\label{sec:density}
In this section, we examine the learned density model $q_{\phi}$ for NDI+EBM and show that it highly correlates with the true mujoco rewards which are linear functions of forward velocity. 
We randomly sample test states $s$ and multiple test actions $a_{s}$ per test state, both from a uniform distribution with boundaries at the minimum/maximum state-action values in the demonstration set. We then visualize the log marginal $\log q_{\phi}(s) = \log \sum_{a_{s}} q_{\phi}(s, a_s)$ projected on to two state dimensions: one corresponding to the forward velocity of the robot and the other a random selection, e.g the knee joint angle. Each point in Figure \ref{fig:density} corresponds to a projection of a sampled test state $s$, and the colors scale with the value of $\log q_{\phi}(s)$. For all environments besides Humanoid, we found that the density estimate positively correlates with velocity even on uniformly drawn state-actions which were not contained in the demonstrations. We found similar correlations for Humanoid on states in the demonstration set. Intuitively, a good density estimate should indeed have such correlations, since the true expert occupancy measure should positively correlate with forward velocity due to the expert attempting to consistently maintain high velocity.

\begin{figure}[t]
    \centering
    \includegraphics[width=\textwidth]{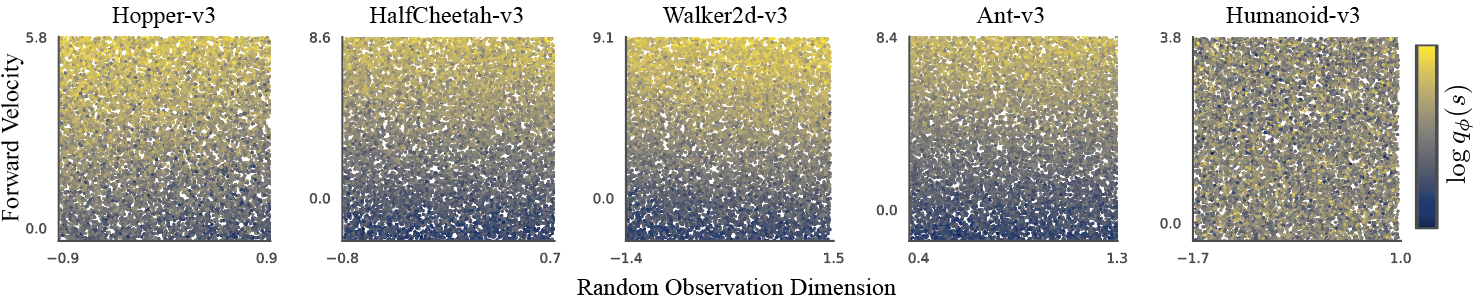}
    \vspace{-15pt}
    \caption{\textbf{Learned density visualization}. We randomly sample test states $s$ and multiple test actions $a_{s}$ per test state, both from a uniform distribution, then visualize the log marginal $\log q_{\phi}(s) = \log \sum_{a_{s}} q_{\phi}(s, a_s)$ projected onto two state dimensions: one corresponding to forward velocity and the other a random selection. Much like true reward function in Mujoco environments, we found that the log marginal positively correlates with forward velocity on 4/5 tasks.}
    \label{fig:density}
    \vspace{-10pt}
\end{figure}

\begin{table}[t]
    \setlength{\tabcolsep}{4pt}
    \renewcommand{\arraystretch}{1.}
    \small
    \caption{\textbf{Effect of varying MI reward weight $\lambda_f$} on (1). Task performance of NDI-EBM (top row) and (2). Imitation performance of NDI-EBM (bottom row) measured as the average KL divergence between $\pi, \pi_{E}$ on states $s$ sampled by running $\pi$ in the true environment, i.e $\E_{s \sim \pi}[D_{\mathrm{KL}}(\pi(\cdot | s) || \pi_{E}(\cdot | s))]$, normalized by the average $D_{\mathrm{KL}}$ between the random and expert policies. $D_{\mathrm{KL}}(\pi || \pi_{E})$ can be computed analytically since $\pi, \pi_{E}$ are conditional gaussians. Setting $\lambda_{f}$ too large hurts task performance while setting it too small is suboptimal for matching the expert occupancy. A middle point of $\lambda = 0.005$ achieves a balance between the two metrics.}
    \vspace{0pt}
    \label{table:lambda_f}
    \begin{center}
    \begin{sc}
    \begin{tabular}{lcccccc}
    \toprule
    & & Hopper & Half-Cheetah & Walker & Ant & Humanoid \\
    \midrule\midrule
    \multirow{2}{*}{$\lambda_{f} = 0.0001$} & Reward & $\mathbf{3506 \pm 188}$ & $\mathbf{5697 \pm 805}$ & $\mathbf{5171 \pm 157}$ & $4158 \pm 523$ & $\mathbf{5752 \pm 632}$ \\
    & KL & $0.15 \pm 0.05$ & $0.32 \pm 0.15$ & $0.25 \pm 0.04$ & $0.51 \pm 0.05$ & $0.41 \pm 1.82$ \\
    \midrule
    \rowcolor{LightOrange} & Reward & $3458 \pm 210$ & $4511 \pm 569$ & $5061 \pm 135$ & $\mathbf{4293 \pm 431}$ & $5305 \pm 555$ \\
    \rowcolor{LightOrange} \multirow{-2}{*}{$\lambda_{f} = 0.005$} & KL & $\mathbf{0.11 \pm 0.02}$ & $\mathbf{0.17 \pm 0.09}$ & $\mathbf{0.22 \pm 0.14}$ & $\mathbf{0.32 \pm 0.12}$ & $\mathbf{0.12 \pm 0.14}$ \\
    \midrule
    \multirow{2}{*}{$\lambda_{f} = 0.1$} & Reward & $1057 \pm 29$ & $103 \pm 59$ & $2710 \pm 501$ & $-1021 \pm 21$ & $142 \pm 50$ \\
    & KL & $0.78 \pm 0.13$ & $1.41 \pm 0.51$ & $0.41 \pm 0.11$ & $2.41 \pm 1.41$ & $0.89 \pm 0.21$ \\
    \midrule\midrule
    Expert & Reward & $3567 \pm 4$ & $4142 \pm 132$ & $5006 \pm 472$ & $4362 \pm 827$ & $5417 \pm 2286$ \\
    \bottomrule
    \end{tabular}
    \end{sc}
    \end{center}
\vspace{-15pt}
\end{table}

\subsection{Ablation studies}
\label{sec:ablation}
Table \ref{table:lambda_f} shows the effect of the varying $\lambda_f$ on task (reward) and imitation performance (KL), i.e similarities between $\pi, \pi_{E}$ measured as $\E_{s \sim \pi}[D_{\mathrm{KL}}(\pi(\cdot | s) || \pi_{E}(\cdot | s))]$. Setting $\lambda_{f}$ too large ($\geq 0.1$) hurts both task and imitation performance as the MI reward $r_f$ dominates the RL objective leading to excessive state-action entropy maximization. Setting it too small ($\leq 0.0001$), i.e only maximizing policy entropy $\mc{H}(\pi_{\theta})$, turns out to benefit task performance, sometimes enabling the imitator to outperform the expert by concentrating most of it's trajectory probability mass to the mode of the expert's trajectory distribution. However, the boosted task performance comes at the cost of suboptimal imitation performance, e.g imitator cheetah running faster than the expert. We found that a middle point of $\lambda_f = 0.005$ simultaneously achieves expert level task performance and good imitation performance. This phenomenon is analogous to how controlling the strength of entropy regularization on the generator can lead to either mode-seeking vs mode covering behavior in GANs. In summary, these results show that state-action entropy $\mc{H}^{f}$ maximization improves distribution matching between $\pi, \pi_{E}$ over policy entropy $\mc{H}(\pi_{\theta})$ maximization, but distribution matching may not be ideal for task performance maximization, e.g in apprenticeship learning settings. 



\section{Discussion and Outlook}
This work’s main contribution is a new principled framework for IL and an algorithm that obtains state-of-the-art demonstration efficiency. One future direction is to apply NDI to harder visual IL tasks for which AIL is known perform poorly. While the focus of this work is to improve on demonstration efficiency, another important IL performance metric is environment sample complexity. Future works could explore combining off-policy RL or model-based RL with NDI to improve on this end. Finally, there is a rich space of questions to answer regarding the effectiveness of the SAELBO reward $r_{f}$. We posit that, for example, in video game environments $r_{f}$ may be crucial for success since the expert spends most of its time in levels that are extremely hard to reach with just action entropy maximization. Furthermore, one could improve on the tightness of SAELBO by incorporating negative samples \cite{oord2018cpc} and learning the critic function $f$ so that it is close to the optimal critic. 


\newpage
\bibliography{main}

\newpage
\appendix
\begin{center}
\Large \textbf{Imitation with Neural Density Models - Appendix}
\end{center}

\section{Proofs}
\label{sec:proofs}

We first make clear the assumptions on the MDPs considered henceforth. 

\begin{restatable}{assumption}{assumption_mdp} 
\label{ass:mdp}
All considered MDPs have deterministic dynamics governed by a transition function $P: \S \times \A \rightarrow \S$. Furthermore, $P$ is injective with respect to $a \in \A$, i.e $\forall s, a, a'$ it holds that $a \neq a' \Rightarrow P(s, a) \neq P(s, a')$. 
\end{restatable}

We note that Assumption \ref{ass:mdp} holds for most continuous robotics and physics environments as they are deterministic and inverse dynamics functions $P^{-1}: \S \times \S \rightarrow \A$ have been successfully used in benchmark RL environments such as Mujoco \cite{todorov2012mujoco, todorov2014invertible}  and Atari \cite{pathak2017inverse}. 

\subsection{Proof of Theorem \ref{thm:entropy_lowerbound}}
\label{sec:proof_thm_entropy_lowerbound}
We first prove some useful lemmas. 

\begin{lemma}
\label{lem:extended_entropy}
Let $\gH: \hat{p} \mapsto \int_{\gX} \hat{p}(x) \log \hat{p}(x)$ denote the generalized entropy defined on the extended domain of non-normalized densities $\Delta^{+} = \{\hat{p}: \gX \rightarrow \RR^{+}$ | $\exists Z > 0$ st $\int_{\gX} \hat{p}(x)/Z = 1\}$ where $\RR^{+}$ is the set of non-negative real numbers. $\gH$ is concave. 
\end{lemma}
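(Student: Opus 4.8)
The plan is to reduce the concavity of the functional $\gH$ to the elementary concavity of the scalar map $\phi(t) = -t\log t$ on $\RR^{+}$, applied pointwise and then integrated over $\gX$. (I read the displayed formula as the standard entropy $\gH(\hat{p}) = -\int_{\gX}\hat{p}(x)\log\hat{p}(x)$, consistent both with the main text's definition $\gH(\rho) = \E_{\rho}[-\log\rho]$ and with the asserted concavity, since $t\mapsto t\log t$ is convex and so its integral would instead be convex.)

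First I would check that the domain $\Delta^{+}$ is convex, so that the statement is even meaningful. Given $\hat{p}, \hat{q} \in \Delta^{+}$ with $\int_{\gX}\hat{p} = Z_{p} > 0$ and $\int_{\gX}\hat{q} = Z_{q} > 0$, and any $\lambda \in [0,1]$, the mixture $\lambda\hat{p} + (1-\lambda)\hat{q}$ is non-negative and integrates to $\lambda Z_{p} + (1-\lambda)Z_{q} > 0$, hence again lies in $\Delta^{+}$.

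The core step is that $\gH$ is a separable integral functional: its integrand depends on $\hat{p}$ only through the pointwise value $\hat{p}(x)$. I would record that $\phi(t) = -t\log t$, with the convention $\phi(0) = 0$ that makes it continuous at the boundary, is concave on $\RR^{+}$ because $\phi''(t) = -1/t < 0$ for $t > 0$. Concavity of $\phi$ yields, for each fixed $x$,
\[
\phi\big(\lambda\hat{p}(x) + (1-\lambda)\hat{q}(x)\big) \geq \lambda\,\phi(\hat{p}(x)) + (1-\lambda)\,\phi(\hat{q}(x)).
\]
Integrating this inequality over $\gX$ and using linearity of the integral gives
\[
\gH\big(\lambda\hat{p} + (1-\lambda)\hat{q}\big) \geq \lambda\,\gH(\hat{p}) + (1-\lambda)\,\gH(\hat{q}),
\]
which is exactly the claimed concavity.

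The main obstacle, such as it is, is purely the boundary and integrability bookkeeping in the non-normalized setting: one must fix the convention $0\log 0 = 0$, ensure the defining integrals are well-defined (finite, or diverging to $-\infty$ consistently), and confirm the mixture stays in $\Delta^{+}$. None of this is deep — the argument is essentially ``an integral of concave functions of $\hat{p}(x)$ is concave in $\hat{p}$'' — so the only real care is to phrase the pointwise-to-integral passage correctly for densities that need not normalize to one.
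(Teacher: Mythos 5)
Your proof is correct and follows essentially the same route as the paper's: both reduce the claim to the pointwise concavity of $u \mapsto -u\log u$ on $\RR^{+}$ and then integrate (the paper sums) the resulting inequality. Your additional checks --- convexity of the domain $\Delta^{+}$, the convention $0\log 0 = 0$, and the observation that the displayed formula is missing a minus sign relative to the intended entropy --- are all sound and slightly more careful than the paper's version.
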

\begin{proof}
\begin{align*}
\gH(\lambda \hat{p} + (1 - \lambda) \hat{q}) 
&= -\sum_x \big(\lambda \hat{p}(x) + (1 - \lambda) \hat{q}(x)\big) \log \big(\lambda \hat{p}(x) + (1 - \lambda) \hat{q}(x)\big) \\
&\geq -\sum_x \lambda \hat{p}(x) \log \hat{p}(x) + (1 - \lambda) \hat{q}(x) \log \hat{q}(x) \\
&= -\lambda \sum_x \hat{p}(x) \log \hat{p}(x) + (1 - \lambda) \sum_x \hat{q}(x) \log \hat{q}(x) \\
&= \lambda \gH(\hat{p}) + (1 - \lambda) \gH(\hat{q})
\end{align*}
where the inequality in the second line holds since $\forall x$, it holds that $\hat{p}(x), \hat{q}(x) \in \RR^+$, and the map $f(u) := -u\log u$ is strictly concave on $u \in \RR^+$; this follows from the fact that $f'(u) = -(1 + \log u)$ is strictly decreasing on $\RR^+$. 
\end{proof}

\begin{lemma}
\label{lem:conditional}
Let MDP $\gM$ satisfy Assumption \ref{ass:mdp}. Let $\{s_{t}, a_{t}\}_{t=0}^{\infty}$ be the stochastic process realized by sampling an initial state from $s_{0} \sim P_{0}(s)$ then running policy $\pi$ with determinsitic, injective dynamics function $P$, i.e $a_{t} \sim \pi(\cdot | s_{t}), s_{t+1} = P(s_{t}, a_{t})$. Then $\forall t \geq 1$,
\begin{align*}
    \gH(s_{t} | s_{t-1}) = \gH(a_{t-1} | s_{t-1})
\end{align*}
\end{lemma}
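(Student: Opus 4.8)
The plan is to condition on the value of $s_{t-1}$ and exploit the fact that, under deterministic dynamics, the next state is a deterministic function of the action which, by Assumption \ref{ass:mdp}, is injective. First I would write each conditional entropy as an average over the conditioning variable,
\[
\gH(s_{t} \mid s_{t-1}) = \E_{s \sim p_{t-1}}\big[\gH(s_{t} \mid s_{t-1}=s)\big], \qquad \gH(a_{t-1} \mid s_{t-1}) = \E_{s \sim p_{t-1}}\big[\gH(a_{t-1} \mid s_{t-1}=s)\big],
\]
where $p_{t-1}$ is the marginal law of $s_{t-1}$. It therefore suffices to establish the per-state identity $\gH(s_{t}\mid s_{t-1}=s)=\gH(a_{t-1}\mid s_{t-1}=s)$ for every $s$ in the support of $p_{t-1}$.

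Second, I would fix such an $s$ and observe that, conditioned on $s_{t-1}=s$, the action is distributed as $a_{t-1}\sim\pi(\cdot\mid s)$ while the next state is $s_{t} = P(s,a_{t-1})$, a deterministic function of $a_{t-1}$. The central fact I would invoke is that a deterministic injective map preserves entropy: writing $g_{s} := P(s,\cdot)$, Assumption \ref{ass:mdp} guarantees $g_{s}$ is injective, so the conditional law of $s_{t}$ is exactly the pushforward of $\pi(\cdot\mid s)$ under $g_{s}$, i.e.\ a bijective relabeling of the support. Since the entropy sum $-\sum \hat{p}\log \hat{p}$ depends only on the multiset of probability values assigned to points and not on the labels of those points, it is unchanged under $g_{s}$, yielding $\gH(s_{t}\mid s_{t-1}=s)=\gH(a_{t-1}\mid s_{t-1}=s)$.

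Finally, substituting this per-state identity back into the averaged expressions gives the claim. The one step that requires care is the entropy-preservation argument, and this is precisely where the two halves of Assumption \ref{ass:mdp} do their work: injectivity of $g_{s}$ is exactly what prevents the pushforward from collapsing distinct actions onto a common next state (which would strictly \emph{decrease} entropy), while determinism of $P$ is what ensures no extra randomness is injected in the transition (which would strictly \emph{increase} it). I expect no real obstacle beyond stating this preservation property cleanly; in the discrete summation formalism used throughout these proofs no Jacobian correction arises, so the identity holds with equality rather than up to a change-of-variables term.
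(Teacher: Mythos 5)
Your proof is correct, but it takes a different route from the paper's. The paper expands the joint conditional entropy $\gH(s_t, a_{t-1}\mid s_{t-1})$ by the chain rule in two orders: once as $\gH(s_t\mid s_{t-1},a_{t-1}) + \gH(a_{t-1}\mid s_{t-1})$, where the first term vanishes by determinism of $P$, and once as $\gH(a_{t-1}\mid s_{t-1},s_t) + \gH(s_t\mid s_{t-1})$, where the first term vanishes because injectivity of $P(s,\cdot)$ makes $a_{t-1}$ a function of $(s_{t-1},s_t)$; equating the two gives the claim in two lines. You instead fix $s_{t-1}=s$, view the law of $s_t$ as the pushforward of $\pi(\cdot\mid s)$ under the injective map $g_s = P(s,\cdot)$, and invoke invariance of the entropy sum under bijective relabeling of the support, then average over $s$. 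Both arguments rest on exactly the same two hypotheses and both implicitly live in the summation formalism (your explicit remark that no Jacobian correction arises is apt, and the paper's zeroing of $\gH(s_t\mid s_{t-1},a_{t-1})$ carries the same implicit discreteness assumption, since for continuous variables that differential entropy would be $-\infty$ rather than $0$). The paper's version is slightly slicker and avoids having to state the relabeling-invariance property as a separate fact; yours makes the mechanism more transparent, in particular why injectivity is needed to prevent entropy from strictly decreasing under the pushforward.
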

\begin{proof}
We expand $\gH(s_t,a_{t-1}|s_{t-1})$ in two different ways:
\begin{align*}
\gH(s_t,a_{t-1}|s_{t-1}) &= \gH(s_t|s_{t-1}, a_{t-1}) + \gH(a_{t-1}|s_{t-1}) = 0 + \gH(a_{t-1}|s_{t-1}) \\
\gH(s_t,a_{t-1}|s_{t-1}) &= \gH(a_{t-1}|s_{t-1}, s_t) + \gH(s_t|s_{t-1}) = 0 + \gH(s_t|s_{t-1})
\end{align*}
The $\gH(s_t|s_{t-1}, a_{t-1}), \gH(a_{t-1}|s_{t-1}, s_t)$ terms can be zero'd out due to the determinstic, injective dynamics assumption. Thus, we conclude that $\gH(s_t|s_{t-1}) = \gH(a_{t-1}|s_{t-1})$.  
\end{proof}

\newpage

\lowerbound*

\begin{proof}

\begin{align*}
\gH(\rho_{\pi_{\theta}}(s, a)) 
&= -\sum_{s, a} \rho_{\pi_{\theta}}(s, a) \log \rho_{\pi_{\theta}}(s, a) \\
&= -\sum_{s, a} \rho_{\pi_{\theta}}(s, a) \log \frac{\rho_{\pi_{\theta}}(s, a)}{\rho_{\pi_{\theta}}(s)} 
- \sum_{s, a} \rho_{\pi_{\theta}}(s, a) \log \rho_{\pi_{\theta}}(s) \\
&= -\sum_{s, a} \rho_{\pi_{\theta}}(s, a) \log \pi_{\theta}(a | s) - \sum_{s} \rho_{\pi_{\theta}}(s) \log \rho_{\pi_{\theta}}(s) \\
&= \gH(\pi_{\theta}) + \gH(\rho_{\pi_{\theta}}(s)) 
\end{align*}
We now lower bound the state-marginal occupancy entropy term 
\begin{align*}
\gH(\rho_{\pi_{\theta}}(s)) 
&= \gH(\sum_{t = 0}^{\infty} \gamma^{t} p_{\theta, t}(s)) \\
&\geq \sum_{t = 0}^{\infty} \gamma^{t} \gH(s_{t}) && \text{Lemma  \ref{lem:extended_entropy}} \numberthis \label{eq:jensen} \\
&= \Big(\gH(s_{0}) + \sum_{t = 1}^{\infty} \gamma^{t} \gH(s_{t})\Big)\\
&= \gH(s_{0}) 
+ \sum_{t = 1}^{\infty} \gamma^{t} \gH(s_{t} | s_{t-1}) 
+ \sum_{t = 1}^{\infty} \gamma^{t} I(s_{t}; s_{t - 1})\\
\end{align*}
Let us consider each term separately starting with the entropy term: 
\begin{align*}
\sum_{t = 1}^{\infty} \gamma^{t} \gH(s_{t} | s_{t-1})
&= \sum_{t = 1}^{\infty} \gamma^{t} \gH(a_{t-1} | s_{t-1}) && \text{Lemma \ref{lem:conditional}}\\
&= \gamma \sum_{t = 0}^{\infty} \gamma^{t} \gH(a_{t} | s_{t}) \\
&= \gamma \sum_{t = 0}^{\infty} \gamma^{t} \E_{p(s_{t}, a_{t})}[-\log p(a_{t} | s_{t})] \\
&= \gamma \sum_{t = 0}^{\infty} \gamma^{t} \E_{\pi_{\theta}}[-\log p(a_{t} | s_{t})] \\
&= \gamma \E_{\pi_{\theta}}[-\sum_{t = 0}^{\infty} \gamma^{t} \log \pi_{\theta}(a_{t} | s_{t})] \\
&= \gamma \gH(\pi_{\theta}) 
\end{align*}

We now lower bound the Mutual Information (MI) term using the bound of Nguyen, Wainright, and Jordan \cite{nguyen2010nwj}, also known as the $f$-GAN KL \cite{nowozin2016f} and MINE-$f$ \cite{belghazi2018entropy}. For random variables $X, Y$ distributed according to $p_{\theta_{xy}}(x, y), p_{\theta_{x}}(x), p_{\theta_{y}}(y)$ where $\theta = (\theta_{xy}, \theta_{x}, \theta_{y})$, and any critic function $f(x, y)$, it holds that $I(X, Y | \theta) \geq I^{f}_{\mathrm{NWJ}}(X; Y | \theta)$ where, 
\begin{equation}
I^f_{\mathrm{NWJ}}(X; Y) := \E_{p_{\theta_{xy}}}[f(x, y)] - e^{-1}\E_{p_{\theta_{x}}}[\E_{p_{\theta_{y}}}[e^{f(x, y)}]]
\label{eq:inwj_sup}
\end{equation}
This bound is tight when $f$ is chosen to be the optimal critic $f^*(x, y) = \log \frac{p_{\theta_{xy}}(x, y)}{p_{\theta_{x}}(x)p_{\theta_y}(y)} + 1$. Applying this bound we obtain: 

\begin{align*}
\sum_{t = 1}^{\infty} \gamma^{t} I(s_{t}; s_{t - 1} | \theta) 
&\geq \sum_{t = 1}^{\infty} \gamma^{t} I_{\mathrm{NWJ}}^{f}(s_{t}; s_{t - 1} | \theta) \\ 
&= \gamma \sum_{t = 0}^{\infty} \gamma^{t} I_{\mathrm{NWJ}}^{f}(s_{t+1}; s_{t} | \theta)
\end{align*}

Combining all the above results, 
\begin{align*}
\gH(\rho_{\pi_{\theta}}(s, a)) 
&= \gH(\pi_{\theta}) + \gH(\rho_{\pi_{\theta}}(s)) \\
&= \gH(\pi_{\theta}) 
+\gH(s_{0}) 
+ \sum_{t = 1}^{\infty} \gamma^{t} \gH(s_{t} | s_{t-1}) 
+ \sum_{t = 1}^{\infty} \gamma^{t} I(s_{t}; s_{t - 1} | \theta) \\
&\geq \gH(s_{0}) 
+ (1 + \gamma)\gH(\pi_{\theta}) 
+ \gamma \sum_{t = 0}^{\infty} \gamma^{t} I_{\mathrm{NWJ}}^{f}(s_{t+1}; s_{t} | \theta) 
\end{align*}
Setting $\gH^{f}(\rho_{\pi_{\theta}}) := \gH(s_{0}) 
+ (1 + \gamma)\gH(\pi_{\theta}) 
+ \gamma \sum_{t = 0}^{\infty} \gamma^{t} I_{\mathrm{NWJ}}^{f}(s_{t+1}; s_{t} | \theta)$ concludes the proof. 
\end{proof}

\textbf{Tightness of the SAELBO}: We call $\gH^{f}(\rho_{\pi_{\theta}})$ the State-Action Entropy Lower Bound (SAELBO). There are two potential sources of slack for the SAELBO. The first source is from the application of Jensen's inequality in Eq. \ref{eq:jensen}. This slack becomes smaller as $p_{\theta, t}$ converges to a stationary distribution as $t \rightarrow \infty$. The second source is the MI lowerbound $I_{\mathrm{NWJ}}$ in Eq. \ref{eq:inwj_sup} , which can be made tight if $f$ is sufficiently flexible and chosen (or learned) to be the optimal critic. 

\newpage

\subsection{Proof of Theorem \ref{thm:gradient}}
\label{sec:proof_thm_gradient}

\entropygradient*

\begin{proof}
We take the gradient of the SAELBO $\gH^{f}(\rho_{\pi_{\theta}})$ w.r.t $\theta$
\begin{align*}
\grad \gH^{f}(\rho_{\pi_{\theta}}) = \grad \gH(s_{0}) 
+ \grad (1 + \gamma)\gH(\pi_{\theta}) 
+ \grad \gamma \sum_{t = 0}^{\infty} \gamma^{t} I_{\mathrm{NWJ}}^{f}(s_{t+1}; s_{t} | \theta) 
\end{align*}
The first term vanishes, so we can consider the second and third term separately. Using the standard MaxEntRL policy gradient result (e.g Lemma A.1 of \cite{ho2016generative}), 
\begin{align*}
\grad (1 + \gamma)\gH(\pi_{\theta}) 
&= \grad \E_{\pi_{\theta}}[-\sum_{t = 0}^{\infty} \gamma^t (1 + \gamma) \log q_{\pi}(a_{t} | s_{t})] \\
&= \grad J(\pi_{\theta}, \bar{r}=r_{\pi}) \numberthis \label{eq:ent_gradient}
\end{align*}
Now for the third term, we further expand the inner terms: 

\begin{align*}
\grad \gamma &\sum_{t = 0}^{\infty} \gamma^{t} I_{\mathrm{NWJ}}^{f}(s_{t+1}; s_{t} | \theta) \\
&:= \grad \gamma \sum_{t = 0}^{\infty} \gamma^{t} \Big(\E_{p_{\theta, t:t+1}(s_{t+1}, s_{t})}[f(s_{t+1}, s_{t})] - e^{-1}\E_{s_{t+1} \sim p_{\theta, t+1}(s_{t+1})}[\E_{\tilde{s}_{t} \sim p_{\theta, t}(s_{t})}[e^{f(s_{t+1}, \tilde{s}_{t})}]] \Big) \\
&= \grad \gamma \sum_{t = 0}^{\infty} \gamma^{t} \Big(\E_{s_{0}, a_{0}, ... \sim \pi_{\theta}}[f(s_{t+1}, s_{t})] - e^{-1}\E_{s_{0}, a_{0}, ... \sim \pi_{\theta}}[\E_{\tilde{s}_{0}, \tilde{a}_{0}, ... \sim \pi_{\theta}}[e^{f(s_{t+1}, \tilde{s}_{t})}]] \Big) \\
&= \grad \E_{s_{0}, a_{0}, ... \sim \pi_{\theta}}[\sum_{t = 0}^{\infty} \gamma^{t+1} f(s_{t+1}, s_{t})] - \frac{e}{\gamma} \grad \E_{s_{0}, a_{0}, ... \sim \pi_{\theta}}[\sum_{t = 0}^{\infty} \gamma^{t} \E_{\tilde{s}_{0}, \tilde{a}_{0}, ... \sim \pi_{\theta}}[e^{f(s_{t+1}, \tilde{s}_{t})}]] \Big) \numberthis \label{eq:mi1}
\end{align*}

The first term is the gradient of a discounted model-free RL objective with $\bar{r}(s_{t}, a_{t}, s_{t+1}) = f(s_{t+1}, s_{t})$ as the fixed reward function. The second term is not yet a model-free RL objective since the inner expectation explicitly depends on $\theta$. We further expand the second term. 

\begin{align*}
\nabla_{\theta} &\E_{s_{0}, a_{0}, ... \sim \pi_{\theta}}[\sum_{t = 0}^{\infty} \gamma^{t} \E_{\tilde{s}_{0}, \tilde{a}_{0}, ... \sim \pi_{\theta}}[e^{f(s_{t+1}, \tilde{s}_{t})}]]  \\
&= \E_{s_{0}, a_{0}, ... \sim \pi_{\theta}}[\Big(\sum_{t = 0}^{\infty} \nabla_{\theta} \log \pi_{\theta}(a_{t} | s_{t})\Big) \sum_{t = 0}^{\infty} \gamma^{t} \E_{\tilde{s}_{0}, \tilde{a}_{0}, ... \sim \pi_{\theta}}[e^{f(s_{t+1}, \tilde{s}_{t})}]] \\
&\quad\quad+\E_{s_{0}, a_{0}, ... \sim \pi_{\theta}}[\nabla_{\theta} \sum_{t = 0}^{\infty} \gamma^{t} \E_{\tilde{s}_{0}, \tilde{a}_{0}, ... \sim \pi_{\theta}}[e^{f(s_{t+1}, \tilde{s}_{t})}]] \\
&= \E_{s_{0}, a_{0}, ... \sim \pi_{\theta}}[\Big(\sum_{t = 0}^{\infty} \nabla_{\theta} \log \pi_{\theta}(a_{t} | s_{t})\Big) \sum_{t = 0}^{\infty} \gamma^{t} \E_{\tilde{s}_{0}, \tilde{a}_{0}, ... \sim \pi_{\theta}}[e^{f(s_{t+1}, \tilde{s}_{t})}]] \\
&\quad\quad+\E_{s_{0}, a_{0}, ... \sim \pi_{\theta}}[\nabla_{\theta} \E_{\tilde{s}_{0}, \tilde{a}_{0}, ... \sim \pi_{\theta}}[\sum_{t = 0}^{\infty} \gamma^{t} e^{f(s_{t+1}, \tilde{s}_{t})}]] \\
&= \E_{s_{0}, a_{0}, ... \sim \pi_{\theta}}[\Big(\sum_{t = 0}^{\infty} \nabla_{\theta} \log \pi_{\theta}(a_{t} | s_{t})\Big) \sum_{t = 0}^{\infty} \gamma^{t} \E_{\tilde{s}_{0}, \tilde{a}_{0}, ... \sim \pi_{\theta}}[e^{f(s_{t+1}, \tilde{s}_{t})}]] \\
&\quad\quad+\E_{s_{0}, a_{0}, ... \sim \pi_{\theta}}[\E_{\tilde{s}_{0}, \tilde{a}_{0}, ... \sim \pi_{\theta}}[\Big(\sum_{t = 0}^{\infty} \nabla_{\theta} \log \pi_{\theta}(\tilde{a}_{t} | \tilde{s}_{t})\Big) \sum_{t = 0}^{\infty} \gamma^{t} e^{f(s_{t+1}, \tilde{s}_{t})}]] \\
&= \E_{s_{0}, a_{0}, ... \sim \pi_{\theta}}[\Big(\sum_{t = 0}^{\infty} \nabla_{\theta} \log \pi_{\theta}(a_{t} | s_{t})\Big) \sum_{t = 0}^{\infty} \gamma^{t} \E_{\tilde{s}_{0}, \tilde{a}_{0}, ... \sim \pi_{\theta}}[e^{f(s_{t+1}, \tilde{s}_{t})}]] \\
&\quad\quad+\E_{\tilde{s}_{0}, \tilde{a}_{0}, ... \sim \pi_{\theta}}[\E_{s_{0}, a_{0}, ... \sim \pi_{\theta}}[\Big(\sum_{t = 0}^{\infty} \nabla_{\theta} \log \pi_{\theta}(\tilde{a}_{t} | \tilde{s}_{t})\Big) \sum_{t = 0}^{\infty} \gamma^{t} e^{f(s_{t+1}, \tilde{s}_{t})}]] \\
&= \E_{s_{0}, a_{0}, ... \sim \pi_{\theta}}[\Big(\sum_{t = 0}^{\infty} \nabla_{\theta} \log \pi_{\theta}(a_{t} | s_{t})\Big) \sum_{t = 0}^{\infty} \gamma^{t} \E_{\tilde{s}_{0}, \tilde{a}_{0}, ... \sim \pi_{\theta}}[e^{f(s_{t+1}, \tilde{s}_{t})}]] \\
&\quad\quad+\E_{\tilde{s}_{0}, \tilde{a}_{0}, ... \sim \pi_{\theta}}[\Big(\sum_{t = 0}^{\infty} \nabla_{\theta} \log \pi_{\theta}(\tilde{a}_{t} | \tilde{s}_{t})\Big) \sum_{t = 0}^{\infty} \gamma^{t} \E_{s_{0}, a_{0}, ... \sim \pi_{\theta}}[e^{f(s_{t+1}, \tilde{s}_{t})}]] \\
&= \E_{s_{0}, a_{0}, ... \sim \pi_{\theta}}[\Big(\sum_{t = 0}^{\infty} \nabla_{\theta} \log \pi_{\theta}(a_{t} | s_{t})\Big) \sum_{t = 0}^{\infty} \gamma^{t} \Big(\E_{\tilde{s}_{0}, \tilde{a}_{0}, ... \sim \pi_{\theta}}[e^{f(s_{t+1}, \tilde{s}_{t})}] + \E_{\tilde{s}_{0}, \tilde{a}_{0}, ... \sim \pi_{\theta}}[e^{f(\tilde{s}_{t+1}, s_{t})}]\Big)] \\
&= \E_{s_{0}, a_{0}, ... \sim \pi_{\theta}}[\Big(\sum_{t = 0}^{\infty} \nabla_{\theta} \log \pi_{\theta}(a_{t} | s_{t})\Big) \sum_{t = 0}^{\infty} \gamma^{t} \Big(\E_{\tilde{s}_{t} \sim p_{\theta, t}}[e^{f(s_{t+1}, \tilde{s}_{t})}] + \E_{\tilde{s}_{t+1} \sim p_{\theta, t+1}}[e^{f(\tilde{s}_{t+1}, s_{t})}]\Big)]
\\
&= \nabla_{\theta} \E_{s_{0}, a_{0}, ... \sim \pi_{\theta}}[\sum_{t = 0}^{\infty} \gamma^{t} \Big(\E_{\tilde{s}_{t} \sim q_{t}}[e^{f(s_{t+1}, \tilde{s}_{t})}] + \E_{\tilde{s}_{t+1} \sim q_{t+1}}[e^{f(\tilde{s}_{t+1}, s_{t})}]\Big)]
\numberthis \label{eq:mi2}\\
\end{align*}

Combining the results of Eq. \ref{eq:mi1} and Eq. \ref{eq:mi2}, we see that: 
\begin{align*}
    \grad \gamma \sum_{t = 0}^{\infty} \gamma^{t} I_{\mathrm{NWJ}}^{f}(s_{t+1}; s_{t} | \theta) = \grad J(\theta, \bar{r}=r_{f})
\end{align*}
where, $r_{f}(s_{t}, a_{t}, s_{t+1}) = \gamma f(s_{t}, s_{t+1}) -  \frac{\gamma}{e}\E_{\tilde{s}_{t} \sim q_{t}, \tilde{s}_{t+1} \sim q_{t+1}}[e^{f(s_{t+1}, \tilde{s}_{t})} + e^{f(\tilde{s}_{t+1}, s_{t})}]$. 
Finally, putting everything together with the result of Eq. \ref{eq:ent_gradient}: 
\begin{align*}
    \grad \gH^{f}(\rho_{\pi_{\theta}}) = \grad J(\theta, \bar{r}=r_{\pi} + r_{f})
\end{align*}
as desired. 
\end{proof}

\subsection{Discussion of non-adversarial objective}
\label{sec:adversarial_discussion}

Here we briefly elaborate on why the NDI objective is non-adversarial. Recall that for a fixed critic $f: \S \times \S \rightarrow \RR$ the objective is to maximize the expected expert density in addition to the SAELBO $\gH^f$:
\begin{align*}
\max_{\pi_\theta} J(\pi_\theta, \bar{r}=\log \rho_{\pi_{E}}) + \gH^{f}(\rho_{\pi_{\theta}})
\end{align*}
Now suppose we further maximize the SAELBO with respect to the critic $f$
\begin{align*}
\max_{\pi_\theta, f} J(\pi_\theta, \bar{r}=\log \rho_{\pi_{E}}) + \gH^{f}(\rho_{\pi_{\theta}})
\end{align*}
Since \emph{both $\pi_\theta$ and $f$ seek to maximize the lower-bound}, the optimization problem can be solved by coordinate descent where we alternate between updating $f$ and $\pi_{\theta}$ while fixing the counterpart. This corresponds to alternating between policy updates and critic updates where the critic is updated to match the optimal critic for the policy $\pi_{\theta}$, i.e $f^*(s_t, s_{t+1}; \theta) = \log \frac{p_{\theta, t:t+1}(s_t, s_t+1)}{p_{\theta, t}(s_t)p_{\theta, t+1}(s_{t+1})} + 1$. This is in stark contrast to minmax optimization problems for which coordinate descent is not guaranteed to converge \cite{jin2019minmax}. 

\section{Implementation details}
\label{sec:baselines}
Here, we provide implementation details for each IL algorithm.

\textbf{NDI (Ours)}: We experiment with two variants of our method NDI+MADE and NDI+EBM, where the only difference lies in the what density estimation method was used. Across all experiments, our density model $q_{\phi}$ is a two-layer MLP with 256 hidden units and tanh activations. We add spectral normalization \cite{miyato2018spectral} to all layers. All density models are trained with Adam \cite{kingma2014adam} using a learning rate of $0.0001$ and batchsize $256$. We train both MADE and EBM for 200 epochs. All other hyperparameters related to MADE \cite{germain2015made} and SSM \cite{song2019sliced} were taken to be the default values provided in the open-source implementations\footnote{MADE: \url{https://github.com/kamenbliznashki/normalizing_flows}), SSM: \url{https://github.com/ermongroup/ncsn}}. For hyperparameters related to the MaxOccEntRL step, $\lambda_{\pi}$ is tuned automatically in the stable-baselines implementation \cite{stable-baselines}, and we set $\lambda_{f} = 0.005$. All RL related hyperparameters including the policy architecture are the same as those in the original SAC implementation \cite{haarnoja2018soft}. We will be open-sourcing our implementation in the near future. 

\textbf{Behavioral Cloning} \cite{pomerleau1991efficient}: For BC, we use the stable-baselines \cite{stable-baselines} of the \href{https://github.com/hill-a/stable-baselines/blob/master/stable_baselines/common/base_class.py#L289}{.pretrain()} function. We parameterize the model with a two-layer MLP with 256 hidden units. We standardize the observations to have zero mean and unit variance (which we found drastically improves performance). We monitor the validation loss and stop training when the validation loss starts ceases to improve. 

\textbf{GAIL} \cite{ho2016generative}: We use the stable-baselines\cite{stable-baselines} implementation of GAIL using a two-layer MLP with 256 hidden units. Hyperparameters are same as those in the original GAIL implementation for Mujoco \cite{ho2016generative}. During training, we monitor the average discriminator reward and stop training when this reward saturates over 40 episodes. 

\textbf{Random Expert Distillation} \cite{wang2020red}: We use the official implementation\footnote{RED: \url{https://github.com/RuohanW/RED}} of Random Expert Distillation \cite{wang2020red} and explicitly set the BC pretraining flag off for all environments for the results in Table \ref{table:imitation} and Table \ref{table:supp_res}. All other hyperparameters associated with the algorithm were set to the default values that were tuned for Mujoco tasks in the original implementation. For each random seed, we sample the required number of expert trajectories and give that as the input expert trajectories to the RED algorithm.

\textbf{ValueDICE} \cite{kostrikov2020valuedice}: We use the original implementation of ValueDICE \cite{kostrikov2020valuedice}\footnote{ValueDICE: \url{https://github.com/google-research/google-research/tree/master/value_dice}}. All hyperparameters associated with the algorithm were set to the default values that were tuned for Mujoco tasks in the original implementation. For each random seed the algorithm randomly sub-samples the required number of expert trajectories which is passed in as a flag. We conducted a hyperparameter search over the replay regularization and the number of updates performed per time step. We vary the amount of replay regularization from 0 to 0.5 and the number of updates per time step from 2 to 10 but stick with the default values as we do not find any consistent improvement in performance across environments.
\section{Additional experiments}
\label{sec:supp_res}
In this section, we provide additional imitation performance results with 25 expert trajectories and also report imitation performance with an added BC pretraining step. 

\subsection{Varying amounts of demonstration data}
\label{sec:more_demo}
Here we present the results when using 25 expert trajectories: 

\begin{table}[h]
    \setlength{\tabcolsep}{4pt}
    \renewcommand{\arraystretch}{0.9}
    \small
    \caption{\textbf{Task Performance} when provided with a 25 demonstrations. NDI outperforms all baselines on all tasks.}
    \vspace{0pt}
    \label{table:supp_res}
    \begin{center}
    \begin{sc}
    \begin{tabular}{lccccc}
    \toprule
    & Hopper & Half-Cheetah & Walker & Ant & Humanoid \\
    \midrule
    Random & $14\pm 8$ & $-282 \pm 80$ & $1 \pm 5$ & $-70 \pm 111$ & $123 \pm 35$ \\
    \midrule \midrule
    \midrule
    BC & $3498 \pm 103$ & $4167 \pm 95$ & $4816 \pm 196$ & $3596 \pm 214$ & $4905 \pm 612$\\
    RED & $2523 \pm 476$ & $ -3 \pm 4$ & $1318 \pm 446$ & $1004 \pm 5$ & $2052 \pm 585$\\
    GAIL & $3521 \pm 44 $ & $3632 \pm 225$ & $4926 \pm 450$ & $ 3582 \pm 212$ & $259 \pm 21$\\
    ValueDICE & $ 2829 \pm 685$ & $4105 \pm 134$ & $ 4384 \pm 620$ & $3948 \pm 350$ & $ 2116 \pm 1005$\\
    \rowcolor{LightOrange} NDI+MADE & $3514 \pm 105$ & $4253 \pm 105$ & $4892 \pm 109$ & $1023 \pm 322$ & $6013 \pm 550$ \\
    \rowcolor{LightOrange} NDI+EBM  & $\mathbf{3557 \pm 109}$ & $\mathbf{5718 \pm 294}$ & $\mathbf{5210 \pm 105}$ & $\mathbf{4319 \pm 107}$ & $\mathbf{6113 \pm 210}$ \\
    \midrule \midrule
    Expert & $3567 \pm 4$ & $4142 \pm 132$ & $5006 \pm 472$ & $4362 \pm 827$ & $5417 \pm 2286$ \\
    \bottomrule
    \end{tabular}
    \end{sc}
    \end{center}
\end{table}

NDI+EBM outperforms all other methods. RED is still unable to perform well on all tasks. We found that even after hyperparameter tuning, ValueDICE and GAIL slightly underperform the expert on some tasks. 

\subsection{Pretraining with BC}
\label{sec:pretrain}
Here we present the results obtained by pretraining all algorithms with Behavioral Cloning using 1 expert trajectory. The number of pretraining epochs was determined separately for each baseline algorithm through a simple search procedure. 

For RED, we found 100 to be the optimal number of pretraining epochs and more pretraining worsens performance. For GAIL, we use 200 pretraining epochs after conducting a search from 100 to 1000 epochs. We find that the performance improves till 200 epochs and pretraining any longer worsens the performance. For ValueDICE, we use 100 pretraining epochs, determined by the same search procedure as GAIL, and found that the performance decreases when using more than 200 pretraining epochs. For NDI+MADE and NDI+EBM, we use 100 pretraining epochs.

\begin{table}[h]
    \small
    \caption{\textbf{Task Performance} when pretrained with BC and provided with 1 (top), 25 (bottom) expert demonstration.}
    \label{table:supp_res_BC1}
    \begin{center}
    \begin{sc}
    \begin{tabular}{lccccc}
    \toprule
    & Hopper & Half-Cheetah & Walker & Ant & Humanoid \\
    \midrule
    Random & $14\pm 8$ & $-282 \pm 80$ & $1 \pm 5$ & $-70 \pm 111$ & $123 \pm 35$ \\
    \midrule\midrule
    \multicolumn{6}{c}{1 Demonstrations} \\
    \midrule
    RED & $3390 \pm 197$ & $ 3267 \pm 614$ & $2260 \pm 686$ & $3044 \pm 612$ & $571 \pm 191$\\
    GAIL & $3500 \pm 81 $ & $3350 \pm 512$ & $4175 \pm 825$ & $ 2716 \pm 210$ & $221 \pm 48$\\
    ValueDICE & $ 1507 \pm 308$ & $3556 \pm 247$ & $ 1937 \pm 912$ & $1007 \pm 94$ & $ 372 \pm 31$\\
    \rowcolor{LightOrange} NDI+MADE & $3526 \pm 172$ & $4152 \pm 209$ & $4998 \pm 157$ & $4014 \pm 105$ & $5971 \pm 550$ \\
    \rowcolor{LightOrange} NDI+EBM & $\mathbf{3589 \pm 32}$ & $\mathbf{4622 \pm 210}$ & $\mathbf{5105 \pm 105}$ & $\mathbf{4412 \pm 204}$ & $5606 \pm 314$ \\
    \midrule\midrule
    \multicolumn{6}{c}{25 Demonstrations} \\
    \midrule
    RED & $3460 \pm 153$ & $ 3883 \pm 440$ & $4683 \pm 994$ & $4079 \pm 208$ & $4385 \pm 1725$\\
    GAIL & $3578 \pm 24$ & $4139 \pm 275$ & $4904 \pm 282$ & $ 3534 \pm 346$ & $281 \pm 50$\\
    ValueDICE & $2124 \pm 628$ & $3975 \pm 125$ & $ 3939 \pm 1152$ & $3559 \pm 134$ & $101 \pm 33$\\
    \rowcolor{LightOrange} NDI+MADE & $\mathbf{3533 \pm 130}$ & $4210 \pm 159$ & $5010 \pm 189$ & $4102 \pm 99$ & $5103 \pm 789$ \\
    \rowcolor{LightOrange} NDI+EBM & $3489 \pm 73$ & $\mathbf{4301 \pm 155}$ & $\mathbf{5102 \pm 77}$ & $\mathbf{4201 \pm 153}$ & $\mathbf{5501 \pm 591}$ \\
    \midrule\midrule
    Expert & $3567 \pm 4$ & $4142 \pm 132$ & $5006 \pm 472$ & $4362 \pm 827$ & $5417 \pm 2286$ \\
    \bottomrule
    \end{tabular}
    \end{sc}
    \end{center}
\end{table}

We observe that the performance for RED improves drastically when pretrained with BC but is still unable to achieve expert level performance when given 1 demonstration. We observe GAIL produces better results when pretrained for 200 epochs. ValueDICE does not seem to benefit from pretraining. Pretraining also slightly improves the performance of NDI, notably in boosting the performance of NDI+MADE on Ant. 

\subsection{Environment sample complexity}
\label{sec:environment_sample}

Although minimizing environment interactions is not a goal of this work, we show these results in Table \ref{table:environment_sample} for completeness.  We found that NDI roughly requires an order of magnitude less samples than GAIL which may be attributed to using a more stable non-adversarial optimization procedure. ValueDICE, an off-policy IL algorithm optimized to minimize environment sample complexity, requires roughly two orders of magnitude less interactions than NDI. We hope to see future work combine off-policy RL algorithms with NDI to further reduce environment interactions.

\begin{table}[h]
    \setlength{\tabcolsep}{4pt}
    \renewcommand{\arraystretch}{0.9}
    \small
    \caption{\textbf{Environment Sample Complexity} computed as the mean number of environment steps needed to reach expert level performance when provided with ample (25) expert demonstrations. RED excluded as it cannot reach expert performance without BC pretraining. NDI requires less samples than GAIL but more than ValueDICE.}
    \vspace{-5pt}
    \label{table:environment_sample}
    \begin{center}
    \begin{sc}
    \begin{tabular}{lccccc}
    \toprule
    & Hopper & Half-Cheetah & Walker & Ant & Humanoid \\
    \midrule
    GAIL & $8.9\mathrm{M} \pm 1.3\mathrm{M}$ & $10.0\mathrm{M} \pm 3.3\mathrm{M}$ & $15.1\mathrm{M} \pm 3.5\mathrm{M}$ & $34.2\mathrm{M} \pm 8.8\mathrm{M}$ & $43.2\mathrm{M} \pm 11.2\mathrm{M}$ \\
    ValueDICE & $8.3\mathrm{K} \pm 1.6\mathrm{K}$ & $10.7\mathrm{K} \pm 2.1\mathrm{K}$ & $24.3\mathrm{K} \pm 4.5\mathrm{K}$ & $6.9\mathrm{K} \pm 1.1\mathrm{K}$ & $105\mathrm{K} \pm 10.2\mathrm{K}$\\
    \rowcolor{LightOrange} NDI+MADE & $0.8\mathrm{M} \pm 0.2\mathrm{M}$ & $1.3\mathrm{M} \pm 0.4\mathrm{M}$ & $4.8\mathrm{M} \pm 1.1\mathrm{M}$ & $4.5\mathrm{M} \pm 0.5\mathrm{M}$ & $6.8\mathrm{M} \pm 1.7\mathrm{M}$ \\
    \rowcolor{LightOrange} NDI+EBM & $0.5\mathrm{M} \pm 0.1\mathrm{M}$ & $1.4\mathrm{M} \pm 0.3\mathrm{M}$ & $4.1\mathrm{M} \pm 2.1\mathrm{M}$ & $4.9\mathrm{M} \pm 1.5\mathrm{M}$ & $6.1\mathrm{M} \pm 1.1\mathrm{M}$ \\
    \bottomrule
    \end{tabular}
    \end{sc}
    \end{center}
\vspace{-10pt}
\end{table}

\end{document}